\newcommand{\MyMapTemplatePrefix}[4]{\expandafter#1\csname#3#4\endcsname{#2{#4}}}
\newcommand{\MyMapTemplatePrefixNew}[5]{\expandafter#1\csname#4#5\endcsname{#2{#3{#5}}}}
\def\tp{^\top}
\def\st{\text{s.t.~}}
\def\ie{\emph{i.e.}}
\def\etal{\emph{et al.}}
\def\eg{\emph{e.g.}}
\newtheorem{thm}{Theorem}
\newcommand{\tabincell}[2]{\begin{tabular}{@{}#1@{}}#2\end{tabular}}
\begin{document}
%
\title{Action-Attending Graphic Neural Network}
%
%
%

\author{Chaolong~Li*,
        Zhen~Cui*,~\IEEEmembership{Member,~IEEE,}
        Wenming~Zheng,~\IEEEmembership{Member,~IEEE,}
        Chunyan~Xu,~\IEEEmembership{Member,~IEEE,}
        Rongrong~Ji,~\IEEEmembership{Senior Member,~IEEE,}
        and~Jian Yang,~\IEEEmembership{Member,~IEEE}
\thanks{* C. Li and Z. Cui have equal contributions.}
\thanks{C. Li and W. Zheng are with the Key Laboratory of Child Development and Learning Science (Southeast University), Ministry of Education, School of Biological Science \& Medical Engineering, Southeast University, Nanjing 210096, China (e-mail: \{lichaolong, wenming\_zheng\}@seu.edu.cn).}
\thanks{Z. Cui, C. Xu and J. Yang are with the School of Computer Science and Engineering, Nanjing University of Science and Technology, Nanjing 210094, China (e-mail: \{zhen.cui, cyx, csjyang\}@njust.edu.cn).}
\thanks{R. Ji is with the School of Information Science and Engineering, Xiamen University, Xiamen 361005, China (e-mail: rrji@xmu.edu.cn).}
}
\maketitle

\begin{abstract}
  The motion analysis of human skeletons is crucial for human action recognition, which is one of the most active topics in computer vision. In this paper, we propose a fully end-to-end action-attending graphic neural network (A$^2$GNN) for skeleton-based action recognition, in which each irregular skeleton is structured as an undirected attribute graph. To extract high-level semantic representation from skeletons, we perform the local spectral graph filtering on the constructed attribute graphs like the standard image convolution operation. Considering not all joints are informative for action analysis, we design an action-attending layer to detect those salient action units (AUs) by adaptively weighting skeletal joints. Herein the filtering responses are parameterized into a weighting function irrelevant to the order of input nodes. To further encode continuous motion variations, the deep features learnt from skeletal graphs are gathered along consecutive temporal slices and then fed into a recurrent gated network. Finally, the spectral graph filtering, action-attending and recurrent temporal encoding are integrated together to jointly train for the sake of robust action recognition as well as the intelligibility of human actions. To evaluate our A$^2$GNN, we conduct extensive experiments on four benchmark skeleton-based action datasets, including the large-scale challenging NTU RGB+D dataset. The experimental results demonstrate that our network achieves the state-of-the-art performances.
\end{abstract}

\begin{IEEEkeywords}
Human action recognition, Skeleton-based action recognition, Convolutional neural networks, Attention mechanism.
\end{IEEEkeywords}

%
\IEEEpeerreviewmaketitle

\section{Introduction}

\IEEEPARstart{H}{uman} action recognition is an active area of research with wide applications such as video surveillance, games console, robot vision, etc. Over the past few decades, human action recognition from 2D RGB video sequences has been extensively studied~\cite{aggarwal2011human}. However, 2D cameras cannot fully capture human motions, which are actually located in 3D space. With the advent of depth sensors such as Microsoft Kinect and Asus Xtion PRO LIVE, 3D action recognition arises more attention of researchers due to its several advantages in segmenting foreground/background, resisting illumination variations, etc.

Recently several approaches have been developed to deal with 3D human action recognition. Generally they fall into two categories: depth map based or skeleton based. The depth map based methods directly extract volumetric and temporal features of overall point set from depth map sequences. The skeleton based methods utilize 3D coordinates of skeletal joints estimated from depth maps to model actions of human body. As human body can be viewed as an articulated system of rigid bones connected by hinged joints, the actions of human body principally reflect in human skeletal motions in the 3D space~\cite{ye2013survey}. Thereby, to model human skeletal joints should be more effective for action recognition, as suggested in the early work~\cite{johansson1975visual}.

Skeleton based methods~\cite{amor2016action,cai2016effective,devanne20153,ellis2013exploring,slama2015accurate,tao2015moving,vemulapalli2014human,wang2014learning,xia2012view,wang2016graph} have become prevalent for human action recognition since Shotton \etal~\cite{shotton2011real} successively estimated 3D coordinates of skeletal joints from depth maps. To represent trajectories of human body motions, the position, speed or even acceleration of skeletal joints are often gathered from several consecutive frames, and then modeled with some statistic methods (\eg, histogram). Further, the skeletal joints may be partitioned into several parts according to the concurrent function of adjacent joints, so human actions are represented with the motion parameters of body parts. To encode temporal motions of skeletal joints, linear/non-linear dynamical systems are often used to model human action, \eg, Hidden Markov Models (HMMs)~\cite{xia2012view} and Long Short Term Memory (LSTM)~\cite{du2016representation}. However, there still exist some key issues need to be studied deeply. First, how to extract robust high-level semantic features from irregular skeletons? The current skeleton-based methods usually employ some simple statistical features on skeletal joints or body parts. It is insufficient to describe and abstract spatial structure of skeleton at each time slice, whereas non-linear dynamic systems are only used to abstract high-level motion information. Second, which/what action units (AUs) identify a special human action? Most motions are produced from only a few joints or body parts, \eg, waving with arm and hand, kicking ball with leg and foot. Detecting salient action units should be helpful to eliminate some useless motion noises as well as provide a cognitive explanation to understand human action.

To address the above problems, in this paper, we propose a fully end-to-end action-attending graphic neural network (A$^2$GNN) for skeleton-based action recognition. To extract high-level semantic information from spatial skeletons, we represent each human skeleton with an undirected attribute graph, and then perform the local spectral filtering on the structured graph to laywisely abstract spatial skeletal information like the classic convolutional neural network (CNN)~\cite{krizhevsky2012imagenet}. Hence, different from the recent graph-based method~\cite{wang2016graph}, which only took the traditional technique line of graph matching, A$^2$GNN should be a true deep network directly learning from irregular skeletal structure. To detect those salient action units, we design an action-attending layer to adaptively weight skeletal joints for different human motions. Specifically, we parameterize the weighting process as one dynamic function, which takes the responses of local spectral filtering on skeletal graphs as the input. Incidentally, we draw a conclusion that the weighting process is irrelevant to the input order of skeletal joints, which thus can be used for those general graph tasks. After extracting high-level discriminant features at each time slice, we finally stack a recurrent neural network on a consecutive sequence to model temporal variations of different human actions. All processes including spectral graph filtering, action unit detection, temporal motion modeling are integrated into one network framework to train jointly. To evaluate our proposed method, we conduct extensive experiments on four benchmark skeleton-based action datasets: the Motion Capture Database HDM05~\cite{muller2007documentation}, the Florence 3D Action dataset~\cite{seidenari2013recognizing}, the Large Scale Combined (LSC) dataset~\cite{zhang2016}, the NTU RGB+D dataset~\cite{shahroudy2016ntu}. The experimental results demonstrate our A$^2$GNN is competitive over the state-of-the-art methods.

In summary, our contributions are three folds:
\begin{enumerate}
  \item Propose a fully end-to-end graphical neural network framework to deal with skeleton-based action recognition, where we model human skeletons as attribute graphs and then introduce spectral graph filtering to extract high-level skeletal features, .
  \item Design a weighting way to adaptively detect salient action units for different human actions, which not only promotes human action recognition accuracy but also favors our cognitive understanding to human actions.
  \item Achieve the state-of-the-art performances on the four benchmark datasets including the two large-scale challenging datasets: LSC and NTU RGB+D.
\end{enumerate}

The reminder of this paper is organized as follows. In Section~\ref{sec:related-work}, we introduce related work on skeleton based action recognition. In Section~\ref{sec:proposed-model}, we first give an overview of our proposed network and introduce three main modules respectively. Implementation details of our proposed network are stated in Section~\ref{sec:implementation}. Experimental results and discussion are presented in Section~\ref{sec:experiments}. Finally, we conclude this work in Section~\ref{sec:conclusion}.

\section{Related Work}\label{sec:related-work}

The most related works to ours are those methods of skeleton-based human action recognition. Here we briefly review them from the view of skeletal representation, including low-level statistical features and high-level semantic features.

Various low-level statistical features have been used to describe skeleton data in the past years. Generally they contain three categories: joint based, body part based and pose based features. Hussein \etal~\cite{hussein2013human} used covariance matrix of skeletal joint coordinates over time to describe motion trajectories. In the literature~\cite{gowayyed2013histogram}, histogram of oriented displacement was used to represent the 3D trajectories of body joints. Ofli \etal~\cite{ofli2014sequence} chose a few most informative joints, and employed some physical interpretable measures, such as the mean or variance of joint angles, maximum angular velocity of joints and so on, to encode skeletal motions. Considering joint-associated movements, Chaudhry \etal~\cite{chaudhry2013bio} divided human skeleton into several smaller parts hierarchically and depicted each part with certain bio-inspired shapes. Vemulapalli \etal~\cite{vemulapalli2014human} formulated each body part over times into a curved manifold and performed action recognition in the Lie group. In view of skeletal postures, Zanfir \etal~\cite{zanfir2013moving} proposed the moving pose descriptor by considering position, speed and acceleration information of body joints. Xia \etal~\cite{xia2012view} represented postures as histograms of 3D skeletal joint locations by casting  selected joints into corresponding spatial histogram bins. These low-level statistical features are usually limited in representing those complex human actions.

High-level sematic features are popular for encoding human actions due to the robust representation ability. Especially, the temporal pyramid is used to hierarchically encode temporal dynamical variations, and all level features are gathered together to model actions~\cite{tao2015moving,vemulapalli2014human,gowayyed2013histogram,Luo2013Group,wang2012mining}. To model the temporal dynamics, Vemulapalli \etal~\cite{vemulapalli2014human} used dynamic time warping (DTW) and Fourier temporal pyramid (FTP); Wang \etal~\cite{wang2012mining} employed FTP to encoded local occupancy patterns over times; Chaudhry \etal~\cite{chaudhry2013bio} employed linear dynamical systems (LDSs) to learn the dynamical variation of features; Xia \etal~\cite{xia2012view} used HMMs to capture the temporal action dynamics. In addition, more recently, recurrent neural network has been employed to encode the temporal dynamic variations~\cite{du2016representation,baccouche2011sequential}. However, these methods mainly focus on the deep encoding of temporal motions rather than spatial layout of joints.

Until recently, graph was used to represent skeletal motion in the literature~\cite{wang2016graph}, although a few graph-based algorithms~\cite{gaur2011string,wang2013directed} had been proposed for action recognition on RGB videos. These graph-based methods usually took the traditional graph matching strategy after modeling skeletal joints or body parts into the graph structure. Therefore, two crucial issues, the construction of graph and the definition of graph kernel, need be conducted in these graph-based methods. Different from them, we perform spectral graph filtering on skeleton-induced graphs to extract high-level skeletal features from the spatial graphs. With a combination of recurrent motion encoding, the spatial-temporal features of skeletal motions are abstracted from the constructed end-to-end network. In contrast to the existing skeleton-based action recognition methods, another one important difference is that an adaptive action-attending mechanism is introduced to detect those salient action units w.r.t different human actions, which can benefit the final human action recognition.

\section{The Proposed A$^2$GNN}\label{sec:proposed-model}

In this section, we first give an overview of our proposed A$^2$GNN, then we respectively introduce three involved modules: the learning of deep graphical features, the detection of salient action units and the dynamic modeling of temporal motions.

\begin{figure*}[t]
  \centering
  \includegraphics[width=6.8in]{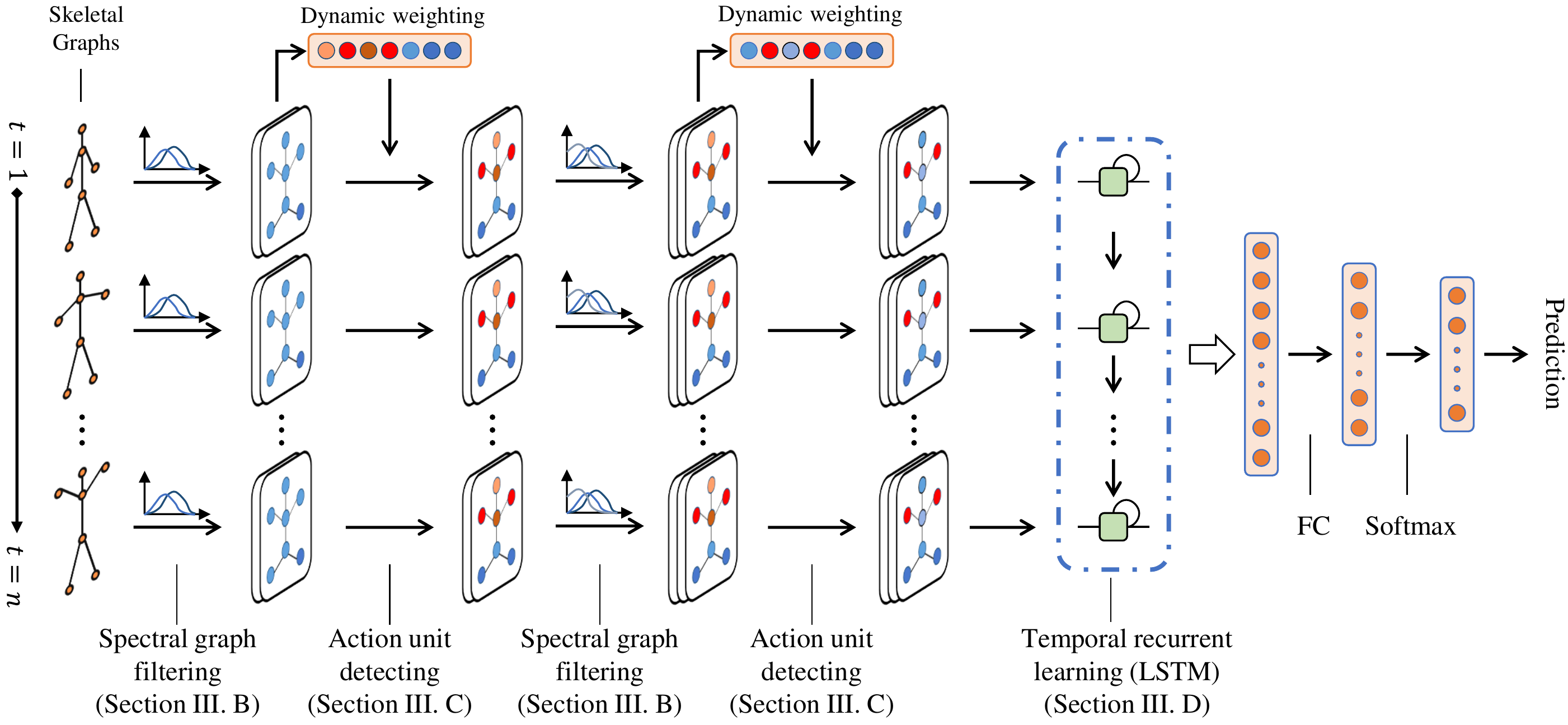}
  \caption{The illustration of our proposed A$^2$GNN architecture. An overall introduction can be found in Section~\ref{sec:overview}.}
  \label{fig:architecture}
\end{figure*}

\subsection{Overview}\label{sec:overview}

An overview of our proposed A$^2$GNN is illustrated in Fig.~\ref{fig:architecture}. The input is a motion sequence of skeletons, in which each skeletal joint is described as a 3D coordinate $(x,y,z)$. For each skeleton at one time slice, we model it into an undirected attribute graph, where each skeletal joint is one node and the bone between two joints is considered as one connected edge. For the weight between two nodes, we assign a connected edge to 1, otherwise 0. In addition, several alternative ways to weight edges are permissible, \eg, Gaussian kernel. Another important characteristic is that each node is associated with an observed signal vector (a.k.a. attribute), which is 3D spatial coordinates of the joint. To reduce individual differences of different samples, we normalize the input signals with coordinating, scaling and rotating transformations.

For the constructed spatial skeletal graphs, in order to extract high-level skeletal features, we expect to perform convolutional filtering on them like on regular grid-shape images. To the end, we introduce local spectral filtering on graphs, inspired by signal theory on graphs~\cite{shuman2013emerging} and the recent graph convolution~\cite{defferrard2016convolutional}. To avoid the eigenvalue decomposition of Laplacian matrix, the original solution is approximated by a polynomial of Laplacian matrix, in which each $k$-order term derives a $k$-hop neighborhood subgraph like a local receptive field. The details are introduced in Section~\ref{sec:gcnn}.

Specially, we design an action-attending layer to detect salient action units by adaptively weighting skeletal joints for different human actions. Usually, specific action units are activated for different human actions, \eg, hands and arms segments for clapping, hand, arm and head segments for drinking. Hence, weighting skeletal joints may reduce the disturbance of those useless joints, and benefit the final action recognition. The related details may be found in Section~\ref{sec:detect_au}.

By alternately stacking the spectral graph filtering layer and the action-attending layer, we may obtain the graph features of skeletal joints. After concatenating features of all joints at one time slice, we fed it into a recurrent network (LSTM) to encode feature variations at all temporal slices. Please refer to more details in Section~\ref{sec:lstm}. Finally, a fully connected layer is used to gather the outputs of recurrent network and learn the skeleton sequence representation followed by a softmax layer for classification. All processes including spectral graph filtering, action unit detection, temporal motion modeling are integrated into a network framework and jointly trained.

\subsection{Learning Deep Graphical Features}\label{sec:gcnn}

We model a body skeleton into an undirected attribute graph $\mcG=(\mcV, \A, \X)$ of $N$ nodes, where $\mcV=\{v_1,...,v_N\}$ is the set of skeletal joints, $\A$ is the (weighted) adjacency matrix, and $\X$ is the matrix of node signals/attributes. The adjacency matrix $\A\in\mbR^{N\times N}$ encodes the connection between two nodes (or joints), where if $v_i, v_j$ are connected, then $A_{ij}=1$, otherwise $A_{ij}=0$. If each joint is endowed with a vectorized signal of 3D coordinates, \ie, $\x:\mcV\rightarrow\mbR^3$, we may stack the signals of all nodes to form the signal matrix $\X\in\mbR^{N\times 3}$, where each row is associated with one node.

In order to extract skeletal graph features, we aim to perform convolutional filtering on these irregular attribute graphs like on regular grid-shape images. As studied in~\cite{bruna2013spectral,niepert2016learning}, it is difficult to express a meaningful translation operator in the vertex domain. According to the spectral graph theory~\cite{shuman2013emerging}, the convolutional filtering on graphs depends on
the graph Laplacian operator $\mcL = \D-\A$, where $\D\in\mbR^{N\times N}$ is the diagonal degree matrix with $D_{ii}=\sum_{j}A_{ij}$. For the graph Laplacian matrix, the normalized version is often used, \ie,
\begin{align}
\mcL^{norm} = \D^{-\frac{1}{2}}\mcL\D^{-\frac{1}{2}}= \I-\D^{\frac{1}{2}}\W\D^{\frac{1}{2}},
\end{align}
where $\I$ is the identity matrix. Unless otherwise specified, we use the normalized Laplacian matrix below.

As a real symmetric positive definite (SPD) matrix, the graph Laplacian matrix $\mcL$ may be decomposed into
\begin{align}
\mcL=\U\Lambda\U\tp,\label{eqn:L_SVD}
\end{align}
where $\Lambda=\diag([\lambda_1,\lambda_2,\cdots,\lambda_N])$ is a diagonal matrix of nonnegative real eigenvalues $\{\lambda_l\}$ (a.k.a spectrum), and the orthogonal matrix $\U=[\u_1,\cdots,\u_N]$ means the corresponding eigenvectors. Analogous to the classic Fourier transform, the graph Fourier transform of a signal $\x$ in spatial domain can be defined as $\hbx = \U\tp\x$, where $\hbx$ is the produced frequency signal. The corresponding inverse Fourier transform is $\x = \U\hbx$.

Give any one filtering function $g(\cdot)$ of the graph $\mcL$, we can define the frequency responses on the input signal $\x$ as
$\whz(\lambda_l)=\whx(\lambda_l)\whg(\lambda_l)$, or the inverse graph Fourier transform,
\begin{align}
z(i) = \sum_{l=1}^{N} \whx(\lambda_l)\whg(\lambda_l)\whu_l(i),
\end{align}
where $\whz(\lambda_l), \whx(\lambda_l), \whg(\lambda_l)$ are the Fourier coefficients corresponding to the spectrum $\lambda_l$. Hence, the matrix description of graph filtering is
\begin{align}
\z = \whg(\mcL)\x =\U\diag([\whg(\lambda_1), \cdots, \whg(\lambda_{N})])\U\tp\x.\label{eqn:Z}
\end{align}
We need to learn the filtering function $g(\cdot)$, but the computational cost of Eqn.~(\ref{eqn:L_SVD}) and Eqn.~(\ref{eqn:Z}) is expensive because the eigenvalue decomposition need to be done.

To address this problem, we may parameterize the filtering function $g(\cdot)$ with a polynomial approximation. As used in the literature~\cite{defferrard2016convolutional}, we employ the Chebyshev expansion of $K$ order~\cite{hammond2011wavelets} by defining the recurrent relation $T_k(x)=2xT_{k-1}(x)-T_{k-2}(x)$ with $T_0=1$ and $T_1=x$. Any one function in the space $x\in[-1,1]$ can be expressed with the expansion: $f(x)=\sum_{k=0}^\infty a_kT_k(x)$.
Suppose we take the $K$-order approximation for $g(\cdot)$, \ie,
\begin{align}
\whg(\lambda_l) = \sum_{k=0}^{K-1} \theta_{k}T_k(\widetilde{\lambda_l}), \label{eqn:g_poly}
\end{align}
where $\theta=[\theta_1,\cdots,\theta_K]\tp\in\mbR^K$ is the parameter vector of polynomial coefficients, and $\widetilde{\lambda_l}=\frac{2}{\lambda_{max}}\lambda_l-1$ with $\lambda_{max}=\max\{\lambda_{1},\cdots,\lambda_{N}\}\leq 2$. By substituting Eqn.~(\ref{eqn:g_poly}) into Eqn.~(\ref{eqn:Z}), we can derive the following equation,
\begin{align}
\z  = \sum_{k=0}^{K-1}\theta_k T_k(\frac{2}{\lambda_{max}}\mcL-\I)\x,\label{eqn:z_poly}
\end{align}
where we use this basic equation,
\begin{align}
\mcL^k&=\U\diag([\lambda_1^k,\cdots,\lambda_N^k])\U\tp\nonumber\\
&=(\U\diag([\lambda_1,\cdots,\lambda_N])\U\tp)^k.
\end{align}

Let $\widetilde{\mcL}=\frac{2}{\lambda_{max}}\mcL-\I$, if we denote the $K$ filter bases about the Laplacian matrix as $\{T_0(\widetilde{\mcL}), T_1(\widetilde{\mcL}), \cdots, T_{K-1}(\widetilde{\mcL})\}$, the final local spectral graph filtering can be written as
\begin{align}
\Z = [T_0(\widetilde{\mcL})\X, T_1(\widetilde{\mcL})\X, \cdots, T_{K-1}(\widetilde{\mcL})\X]\Theta, \label{eqn:Z_out}
\end{align}
where $\Theta\in\mbR^{3K\times d_z}$ is the parameters to be learnt with the $d_z$ output channels, and $\Z\in\mbR^{N\times d_z}$ is the responses of spectral graph filtering. As $\widetilde{\mcL}^k$ encodes a $k$-hop local neighborhood of each node, so the $K$-order polynomial in Eqn.~(\ref{eqn:z_poly}) is a exactly $K$-localized filter function of graphs. Correspondingly, $\Theta$ is the $K$-localized filtering parameter need to be solved.

\subsection{Detecting Salient Action Units}\label{sec:detect_au}

As observed that an action often occurs at a special body part, the detection of salient action units is necessary to reduce the disturbances of irrelevant joints as well as verify human cognition on actions. To detect action units, we propose a new layer named as action-attending layer to adaptively weight skeletal joints for different human actions, inspired by the attention mechanism used for various tasks, \eg, machine translation~\cite{bahdanau2014neural}, speech recognition~\cite{chorowski2015attention}, and image captioning~\cite{xu2015show}, etc. Our purpose is to decide what action units identify (or play a key role in) a special human action.

We stack this layer after the spectral graph filtering layer to take advantage of high-level features. As the $K$-order filtering has a $K$-hop receptive field as introduced in Section~\ref{sec:gcnn}, the filtering responses of each node actually assemble certain information of $K$-path neighbors around the node. Suppose the feature responses after spectral filtering as $\Z\in\mbR^{N\times d_z}$ (in Eqn.~\ref{eqn:Z_out}), we attempt to learn a projecting matrix to weight nodes. But considering different action cases, we expect that the projecting matrix can dynamically change with the different input $\Z$. That is, the dynamic matrix $\mcW\in\mbR^{N'\times N}$ is actually a parameterized function of the variable $\Z$, formally,
\begin{align}
    \tbZ &= \mcW(\Z)\Z,\label{eqn:tbZ}\\
    \st, & \mcW_{ij}\geq0,\quad \sum_{k=1}^{N}\mcW_{ik} = 1,\\
    & i=1,\cdots, N, \quad j=1,\cdots,N'. \nonumber
\end{align}
The larger the matrix item $\mcW_{ij}$ is, the more important the $j$-the node is for action recognition. To model the dynamic property, we define the dynamic function as
\begin{align}
    \mcW(\Z) &= (\tanh(\Z\Q + \b\tp)\V)\tp, \label{eqn:mcW}\\
    \mcW_{ij} &= \frac{\exp(\mcW_{ij})}{\sum_{k=1}^{N} \exp(\mcW_{ik})},
\end{align}
where $\Q\in\mbR^{d_z\times d'}, \V\in\mbR^{d'\times N'},\b\in\mbR^{d'}$ are the parameters to be solved. Hence, the dynamic function $\mcW$ takes advantage of the input feature $\Z$.

In addition to the detection of salient action units, the dynamic weighting function has two extra advantages:

(1) Order-independency of nodes.

Suppose all nodes are ordered in $\{v_1,\cdots, v_N\}$ and the signal matrix $\X=[\x_1,\cdots,\x_N]\tp$ is built, then we can extract the filtering feature $\Z=[\z_1,\cdots,\z_N]\tp$ in the same order of nodes, \ie, $\z_i$ is still associated with $v_i$. However, if all nodes are disordered, \eg, $\{v_N,v_{N-1}\cdots, v_1\}$, correspondingly, we have $\X'=[\x_N,\x_{N-1}\cdots,\x_1]\tp$ and $\Z'=[\z_{N},z_{N-1}\cdots,\z_1]\tp$. If we employ a constant projection $\W$ rather than the dynamic function $\mcW$, it will result into $\W\Z\neq \W\Z'$. That means, different traversing ways on a graph will produce different responses, which is unfeasible to feature comparisons, \eg, the feature $\tbZ$ will be flatten as the input to fed into recurrent neural network (See Section~\ref{sec:lstm}). However, the dynamic weighting function $\mcW$ is order-independent for graphical nodes, according to the following theory.
\begin{thm}
Given the dynamic function $\mcW$ in Eqn.~(\ref{eqn:mcW}), the output $\tbZ$ in Eqn.~(\ref{eqn:tbZ}) is irrelevant to the order of traversing order of graphical nodes.
\end{thm}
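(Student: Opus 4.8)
The plan is to phrase a change of traversal order as the action of a permutation matrix $\P\in\mbR^{N\times N}$ (so that $\P\tp\P=\I$) on the set of node indices, and then to track how $\P$ propagates through the two layers that produce $\tbZ$, showing that it is absorbed in the final product. Relabeling the $N$ joints sends $\A\mapsto\P\A\P\tp$, $\D\mapsto\P\D\P\tp$, hence $\mcL\mapsto\P\mcL\P\tp$ and, since $\lambda_{max}$ is a spectral invariant, $\widetilde{\mcL}\mapsto\P\widetilde{\mcL}\P\tp$; simultaneously $\X\mapsto\P\X$, while the learned parameters $\Theta,\Q,\V,\b$ stay fixed. I would prove the claim in three steps: (i) the spectral filtering of Section~\ref{sec:gcnn} is permutation-equivariant, \ie, $\Z\mapsto\P\Z$; (ii) in the action-attending layer the weight matrix transforms as $\mcW(\Z)\mapsto\mcW(\Z)\P\tp$, \ie, its columns, which index the nodes, get permuted; (iii) these two transformations cancel in Eqn.~(\ref{eqn:tbZ}).

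For step (i), because each $T_k$ is a polynomial and $\P$ is orthogonal, $T_k(\P\widetilde{\mcL}\P\tp)=\P\,T_k(\widetilde{\mcL})\,\P\tp$, so that $T_k(\P\widetilde{\mcL}\P\tp)(\P\X)=\P\,T_k(\widetilde{\mcL})\X$; stacking the $K$ bases horizontally and right-multiplying by $\Theta$ in Eqn.~(\ref{eqn:Z_out}) factors the common $\P$ out on the left, giving $\Z'=\P\Z$. For step (ii), write $\M=\tanh(\Z\Q+\b\tp)$: the additive bias $\b\tp$ is broadcast identically across all rows (equivalently it equals $\mathbf{1}_N\b\tp$ and $\P\mathbf{1}_N=\mathbf{1}_N$), and $\tanh$ acts entrywise, so permuting the rows of $\Z$ merely permutes the rows of $\M$, that is $\tanh(\P\Z\Q+\b\tp)=\P\M$. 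Hence, before the softmax, $\mcW(\Z')=(\P\M\V)\tp=\V\tp\M\tp\P\tp=\mcW(\Z)\P\tp$. The softmax in Eqn.~(\ref{eqn:mcW}) is taken along each row (the sum $\sum_{k}$ runs over the $N$ columns), and for a fixed row both its entries and its normalizer are merely relabeled by the column permutation; therefore the row-softmax commutes with the right action of $\P\tp$, and the normalized $\mcW(\Z')$ still equals $\mcW(\Z)\P\tp$.

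Combining the two steps, $\tbZ'=\mcW(\Z')\,\Z'=\bigl(\mcW(\Z)\P\tp\bigr)\bigl(\P\Z\bigr)=\mcW(\Z)\,(\P\tp\P)\,\Z=\mcW(\Z)\,\Z=\tbZ$, so the output is the same for every permutation $\P$, which is exactly the claim (note that the $N'$ rows of $\tbZ$ are indexed by the attending units and not by the graph nodes, so only the entry values are ever at stake). I expect the main obstacle to be bookkeeping rather than anything conceptual: one must keep straight that node relabeling acts on the rows of $\Z$ but on the columns of $\mcW$, verify that the bias broadcast together with the entrywise $\tanh$ preserves row-equivariance, and check that the row-wise softmax is equivariant under the corresponding column permutation. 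Once these three observations are in place, the cancellation $\P\tp\P=\I$ finishes the argument at once.
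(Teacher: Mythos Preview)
Your proposal is correct and follows essentially the same permutation-matrix argument as the paper: show $\Z'=\P\Z$ via $T_k(\P\widetilde{\mcL}\P\tp)=\P\,T_k(\widetilde{\mcL})\,\P\tp$, then push $\P$ through the attending layer and cancel $\P\tp\P=\I$. Your write-up is in fact slightly more careful than the paper's, which silently treats the bias broadcast and omits the softmax step; your explicit checks that $\P\mathbf{1}_N=\mathbf{1}_N$ and that the row-wise softmax commutes with column permutation fill exactly those small gaps.
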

\begin{proof}
Suppose the input signal matrix $\X$, correspondingly, we can obtain the convolutional filtering feature $\Z$ and $\tbZ$ according to Eqn.~(\ref{eqn:Z_out}) and Eqn.~(\ref{eqn:tbZ}) respectively. Given any one permutation matrix $\mcP\in\{0,1\}^{N\times N}$, $\mcP\X$ equals to reorder all signals in $\X$. Let denote the feature $\Z'$ and $\tbZ'$ when taking $\mcP\X$ as the input. According to Eqn.~(\ref{eqn:Z_out}), we can have
\begin{align}
\Z' &= [T_0(\mcP\widetilde{\mcL}\mcP\tp)\mcP\X, \cdots, T_{K-1}(\mcP\widetilde{\mcL}\mcP\tp)\mcP\X]\Theta\nonumber\\
    &= [\mcP T_0(\widetilde{\mcL})\mcP\tp\mcP\X, \cdots, \mcP T_{K-1}(\widetilde{\mcL})\mcP\tp\mcP\X]\Theta\nonumber\\
    &= \mcP[T_0(\widetilde{\mcL})\X, \cdots, T_{K-1}(\widetilde{\mcL})\X]\Theta\nonumber\\
    &= \mcP\Z.
\end{align}
Note that the Laplacian matrix is also reordered by the permutation matrix $\mcP$. Now we only need to prove that $\tbZ=\tbZ'$. According to Eqn.~(\ref{eqn:tbZ}) and Eqn.~(\ref{eqn:mcW}), we have
\begin{align}
\tbZ' &=  \mcW(\Z')\Z' = \mcW(\mcP\Z)\mcP\Z \nonumber\\
      &= (\tanh(\mcP\Z\Q+\b\tp)\V)\tp\mcP\Z \nonumber\\
      &= (\mcP\tanh(\Z\Q+\b\tp)\V)\tp\mcP\Z \nonumber\\
      &= (\tanh(\Z\Q+\b\tp)\V)\tp\mcP\tp\mcP\Z \nonumber\\
      &= (\tanh(\Z\Q+\b\tp)\V)\tp\Z \nonumber\\
      &= \tbZ.
\end{align}
\end{proof}

(2) Dynamic pooling of nodes.

The dynamic function $\mcW$ may be regarded as a pooling operation on nodes. Given a row $\mcW_{i\cdot}$ of $\mcW$, the output $\mcW_{i\cdot}\Z$ is a new signal by weighting and combining nodes. Correspondingly, we can update the adjacency matrix of nodes and the Laplacian matrix,
\begin{align}
    \A' &= \mcW\A\mcW \tp,\\
    \widetilde{\mcL}' &= \I-\D^{-1/2}\A' \D^{-1/2},
\end{align}
where $\A',\widetilde{\mcL}'\in\mbR^{N'\times N'}$. Thus, the new Laplacain matrix $\widetilde{\mcL}'$ and the feature $\tbZ$ can be fed into the next layer and make the neural network go deeper.

\subsection{Modeling Temporal Motions}\label{sec:lstm}

After extracting the spatial graphical features, we need to model temporal motion variations of a skeletal sequence. Many non-linear dynamic models can be used to solve this problem. Here we employ a special class of recurrent neural networks (RNN), long short-term memory (LSTM)~\cite{hochreiter1997long}, which can mitigate gradient vanishment when back-propagating gradients. LSTM has demonstrated the powerful ability to model long-range dependencies~\cite{graves2013generating,srivastava2015unsupervised,sutskever2014sequence}. Suppose the graphical feature of the $t$-th skeletal frame is $\tbz_t=\text{vectorize}(\tbZ_t)\in \mbR^{N'd_z}$, the cell output $\h_t\in \mbR^{d_h}$ and states $\c_t\in\mbR^{d_h}$ are intermediate vectors, formally, the motion variations can be modeled as
\begin{align}\label{equ:lstm}
    \i &= \sigma(\W_{zi}\tbz_t + \W_{hi}\h_{t-1} + \w_{ci}\odot \c_{t-1} + \b_i),\\
    \f &= \sigma(\W_{zf}\tbz_t + \W_{hf}\h_{t-1} + \w_{cf}\odot \c_{t-1} + \b_f),\\
    \c_t &= \f_t\odot \c_{t-1} + \i_t\odot \tanh(\W_{zc}\tbz_t + \W_{hc}\h_{t-1} + \b_c),\\
    \o_t &= \sigma(\W_{zo}\tbz_t + \W_{ho}\h_{t-1} + \w_{co}\odot \c_t + \b_o),\\
    \h_t &= \o\odot\tanh(\c_t),
\end{align}
where $\sigma(\cdot)$ is the elementwise sigmoid function, \ie, $\sigma(\x)=1/(1+e^{-\x})$, $\odot$ denotes the Hadamard product and $\i, \f, \o, \c\in \mbR^{d_h}$ are respectively the {\em input gate}, {\em forget gate}, {\em output gate}, {\em cell} and {\em cell input} activation vectors. The weight matrices \{$\W_{z\cdot}\in \mbR^{d_h\times N'd_z}, \W_{h\cdot}\in \mbR^{d_h \times d_h}, \w_{c\cdot}\in \mbR^{d_h}$\} and the bias vectors \{$\b_i,\b_f,\b_c,\b_o\in\mbR^{d_h}$\} are the model parameters to be solved. Finally, we use the output gate $\o_t$ as the response at the $t$-th time slice.

\section{Implementation Details}\label{sec:implementation}

In this section, we will introduce our implementation details including network architecture and data augmentation.

\subsection{Network Architecture}

For the undirected graph, we simply construct edge connections based on human bones. That means, if two joints are bridged with a bone, the edge weight is assigned to 1, otherwise 0. Our plain network contains two spectral graph filtering layers along with two action-attending layers followed by a temporal recurrent layer, as shown in Fig.~(\ref{fig:architecture}). In the spectral filtering layers, the receptive fields are set to $K=10$ as default, and the output signals have the length of 32 dimensions and 64 dimensions respectively for two layers. In the action-attending layer, we take a simple design rule: the dimensions remain invariant with regard to the input, \ie, $N'=N, d'=d_z$. In the temporal recurrent layer, we employ the classic LSTM unit to model the temporal dynamics, where the dimension of hidden units is set to 256. In the full connected layer, the output has the same dimension to the input. The network ends with a cross-entropy loss used for classification. The learning rate of network is set to 0.02 with a momentum of 0.9. More analysis/discussion of parameters can be found in Section~\ref{sec:params_analy}. The concrete implementation takes TensorFlow as the infrastructure.

\subsection{Data Processing}

As skeletal data is usually captured from multi-view points and human actions are independent on the user coordinate system, we modify the origin of the coordinate system as the orthocenter of joints for each frame of skeleton, \ie,
$\mcO=\frac{1}{N}\sum_{i=1}^{N} \x_i$, where $\x_i\in \mbR^3$ is a 3D coordinate of the $i$-th joint, $N$ is the number of joints.

To enhance the robustness of model training, we perform data augmentation as widely used in previous deep learning literature~\cite{liu2016spatio,shahroudy2016ntu}. Concretely, for each action sequence, we split the sequence into several equal sized subsequences, here 12 segments, and then pick one frame from each segment randomly to generate a large amount of training sequences. In addition, we randomly scale the skeletons by multiplying a factor in [0.98, 1.02] for the sake of the adaptive capability of scaling.

\section{Experiments}\label{sec:experiments}

To evaluate our proposed A$^2$GNN, we conduct extensive experiments on four benchmark skeleton-based action datasets, including HDM05~\cite{muller2007documentation}, Florence 3D~\cite{seidenari2013recognizing}, Large Scale Combined dataset (LSC)~\cite{zhang2016} and NTU RGB+D~\cite{shahroudy2016ntu}. A brief summarization about them is given in Table~\ref{tab:datasets}. Below we will compare our A$^2$GNN with the recent state-of-the-art methods, then analyze confusion matrices and action unit detection, and finally discuss some network parameters.

\begin{table}[t]
  \caption{Summarization of four action recognition datasets.}
  \label{tab:datasets}
  \centering
  \renewcommand\arraystretch{1.2}
  \begin{tabular}{lcccc}
    \hline\hline
    Dataset&\# Joints&\# Actions&\# Subjects&\# Sequences \\ \hline
    HDM05~\cite{muller2007documentation}        & 31    & 130   & 5     & 2337 \\
    Florence 3D~\cite{seidenari2013recognizing} & 15    & 9     & 10    & 215 \\
    LSC~\cite{zhang2016}                        & 15/20 & 88    & 79    & 3898 \\
    NTU RGB+D~\cite{shahroudy2016ntu}           & 25    & 60    & 40    & 56880 \\
    \hline\hline
  \end{tabular}
\end{table}

\subsection{Datasets}

\subsubsection{HDM05~\cite{muller2007documentation}}

This dataset was captured by using an optical marker-based Vicon system, and gathered 2337 action sequences for 130 motion classes, which are performed by 5 non-professional actors named ``bd", ``bk", ``dg", ``mm" and ``tr". Each skeleton data is represented with 31 joints. Until now, this dataset should involve the most skeleton-based action categories to the best of our knowledge. Due to the intra-class variations and large number of motion classes, this dataset is challenging in action recognition.

\subsubsection{Florence 3D~\cite{seidenari2013recognizing}}

This dataset was collected via a stationary Microsoft Kinect camera. It consists of 215 action sequences from 10 subjects for 9 actions: { wave, drink from a bottle, answer phone, clap, tight lace, sit down, stand up, read watch, bow}. Only 15 joints are recorded for each skeletal data. As a few skeletal joints, some types of actions are difficult to distinguish, such as drink from a bottle, answer phone and read watch.

\subsubsection{LSC~\cite{zhang2016}}

This dataset combines nine publicly available datasets, including MSR Action3D Ext~\cite{li2010action,wang2015convnets,wang2016action}, UTKinect-Action3D~\cite{xia2012view}, MSR DailyActivity 3D~\cite{wang2012mining}, MSR Action Pairs 3D~\cite{oreifej2013hon4d}, CAD120~\cite{koppula2013learning}, CAD60~\cite{sung2012unstructured}, G3D~\cite{bloom2013dynamic,bloom2012g3d}, RGBD-HuDa~\cite{ni2011rgbd}, UTD-MHAD~\cite{chen2015utd}, and form a complex action dataset with 94 actions. As some samples have not skeleton information, we remove them and construct a skeleton dataset of 88 actions by following previous standard protocols. As each individual dataset has its own characteristics in the action execution manners, backgrounds, acting positions, view angles, resolutions, and sensor types, the combination of a large number of action classes makes the dataset more challenging in suffering large intra-class variation compared to each individual dataset.

\subsubsection{NTU RGB+D~\cite{shahroudy2016ntu}}

This dataset is collected by Microsoft Kinect v2 cameras from different views. It consists of 56880 sequences and over 4 million frames for 60 distinct actions, including various of daily actions and pair actions. These actions were performed by 40 subjects aged between 10 and 35. The skeleton data is represented by 25 joints. As far as we know, this dataset is currently the largest skeleton-based action recognition dataset. The large intra-class and view point variations make this dataset great challenging. Meanwhile, a large amount of samples will bring a new challenging to the current skeleton-based action recognition methods.

\subsection{Comparisons with State-of-the-art Methods}

\begin{table}[t]
  \caption{Comparisons on HDM05 dataset. }
  \label{tab:HDM05}
  \centering
  \renewcommand\arraystretch{1.2}
  \begin{tabular}{l c c}
    \hline\hline
    Method  &  \tabincell{c}{Protocol~\cite{wang2015beyond}\\Accuracy} & \tabincell{c}{Protocol~\cite{huang2017riemannian}\\Accuracy}\\ \hline
    RSR-ML~\cite{harandi2014manifold}       & 40.0\%        & - \\
    Cov-RP~\cite{tuzel2006region}           & 58.9\%        & - \\
    Ker-RP~\cite{wang2015beyond}            & 66.2\%        & - \\
    SPDNet~\cite{huang2017riemannian}       & -             & 61.45\%$\pm$1.12\\
    Lie Group~\cite{vemulapalli2014human}   & -             & 70.26\%$\pm$2.89\\
    LieNet~\cite{huang2016deep}             & -             & 75.78\%$\pm$2.26\\
    {P-LSTM~\cite{shahroudy2016ntu}}        & 70.4\%        & 73.42\%$\pm$2.05 \\
    {\bf A$^2$GNN}                          & {\bf 76.5\%}  & {\bf84.47\%$\pm$1.52} \\
    \hline\hline
  \end{tabular}
  \par
  \vskip 0.1 cm
  \leftline{\quad\quad\quad\quad\quad *Note that all 130 classes are used here.}
\end{table}

\begin{table}[t]
  \caption{Comparisons on Florence dataset.}
  \label{tab:Florence}
  \centering
  \renewcommand\arraystretch{1.2}
  \begin{tabular}{lc}
    \hline\hline
    Method  &  Accuracy \\ \hline
    Multi-part Bag-of-Poses~\cite{seidenari2013recognizing} & 82.00\% \\
    Riemannian Manifold~\cite{devanne20153}                 & 87.04\% \\
    Lie Group~\cite{vemulapalli2014human}                   & 90.88\% \\
    Graph-Based~\cite{wang2016graph}                        & 91.63\% \\
    MIMTL~\cite{yang2017discriminative}                     & 95.29\% \\
    {P-LSTM~\cite{shahroudy2016ntu}}                        & 95.35\% \\
    {\bf A$^2$GNN} & {\bf 98.60\%} \\
    \hline\hline
  \end{tabular}
\end{table}

\begin{table}[t]
  \caption{Comparisons on Large Scale Combined dataset.}
  \centering
  \renewcommand\arraystretch{1.2}
  \label{tab:Combined}
    \begin{tabular}{lcccc}
    \hline\hline
    \multirow{2}{*}{Method} &
    \multicolumn{2}{c}{Cross Sample} & \multicolumn{2}{c}{Cross Subject} \cr\cline{2-5}
    &Precision&Recall&Precision&Recall\cr
    \hline
    HON4D~\cite{oreifej2013hon4d}         & 84.6\% & 84.1\% & 63.1\% & 59.3\% \\
    Dynamic Skeletons~\cite{hu2015jointly}& 85.9\% & 85.6\% & 74.5\% & 73.7\% \\
    {P-LSTM~\cite{shahroudy2016ntu}}      & 84.2\% & 84.9\% & 76.3\% & 74.6\% \\
    {\bf A$^2$GNN}          &{\bf 87.6\%}&{\bf 88.1\%}&{\bf 84.0\%}&{\bf 82.0\%}\\
    \hline\hline
    \end{tabular}
\end{table}

\begin{table}[t]
  \caption{Comparisons on NTU RGB+D dataset.}
  \label{tab:NTU}
  \centering
  \renewcommand\arraystretch{1.2}
  \begin{tabular}{lcc}
    \hline\hline
    Method & \tabincell{c}{Cross Subject\\Accuracy} & \tabincell{c}{Cross View\\Accuracy} \\
    \hline
    HON4D~\cite{oreifej2013hon4d}                   & 30.56\% &  7.26\% \\
    Lie Group~\cite{vemulapalli2014human}           & 50.08\% & 52.76\% \\
    Skeletal Quads~\cite{evangelidis2014skeletal}   & 38.62\% & 41.36\% \\
    Dynamic Skeletons~\cite{hu2015jointly}          & 60.23\% & 65.22\% \\
    HBRNN~\cite{batabyal2015action}                 & 59.07\% & 63.97\% \\
    LieNet~\cite{huang2016deep}                     & 61.37\% & 66.95\% \\
    Deep RNN~\cite{shahroudy2016ntu}                & 56.29\% & 64.09\% \\
    Deep LSTM~\cite{shahroudy2016ntu}               & 60.69\% & 67.29\% \\
    P-LSTM~\cite{shahroudy2016ntu}                  & 62.93\% & 70.27\% \\
    ST-LSTM~\cite{liu2016spatio}                    & 69.2\%  & 77.7\% \\
    {\bf A$^2$GNN}                      & {\bf 72.74\%} & {\bf 82.80\%} \\
    \hline\hline
  \end{tabular}
\end{table}

\begin{figure}[t]
  \centering
  \subfigure[]{
    \label{fig:HDM05:All}
    \includegraphics[width=3.4in]{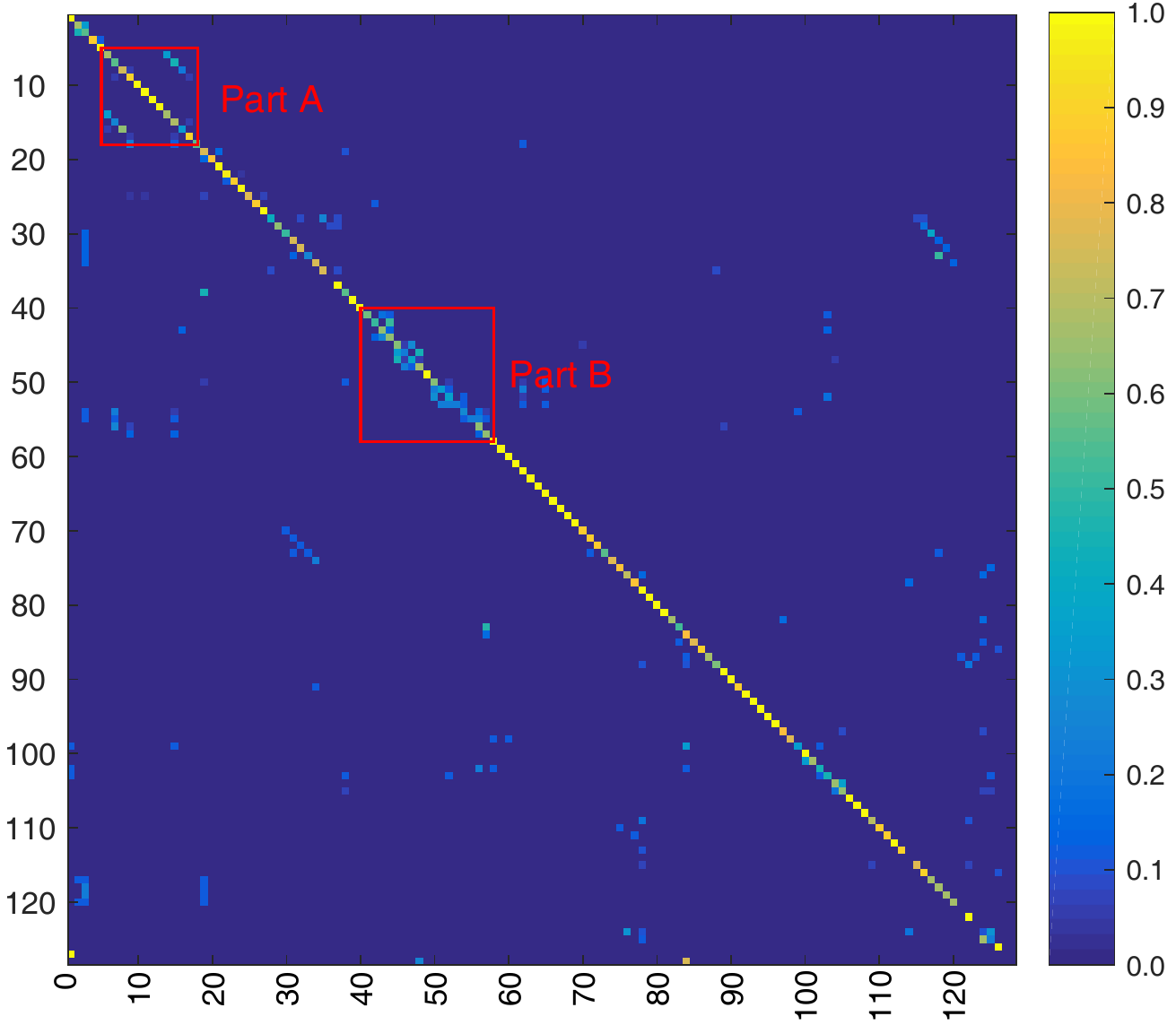}}
  \hspace{0.1in}
  \subfigure[]{
    \label{fig:HDM05:A}
    \includegraphics[width=1.6in]{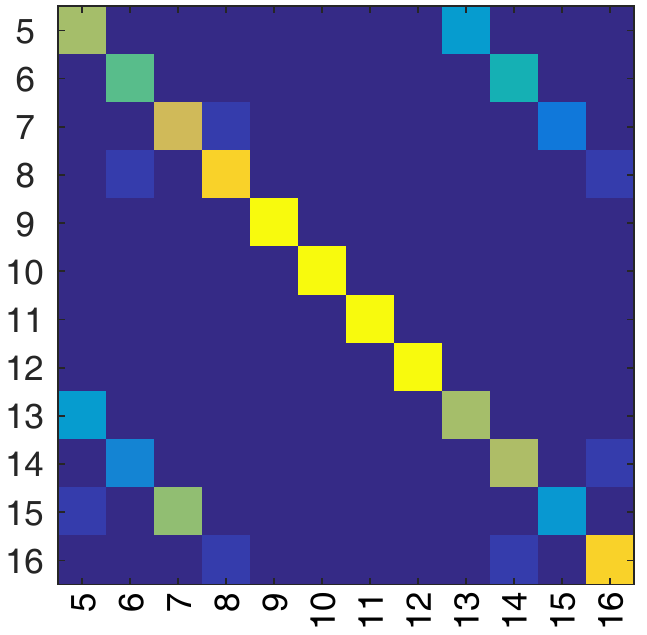}}
  \hspace{-0.05in}
  \subfigure[]{
    \label{fig:HDM05:B}
    \includegraphics[width=1.6in]{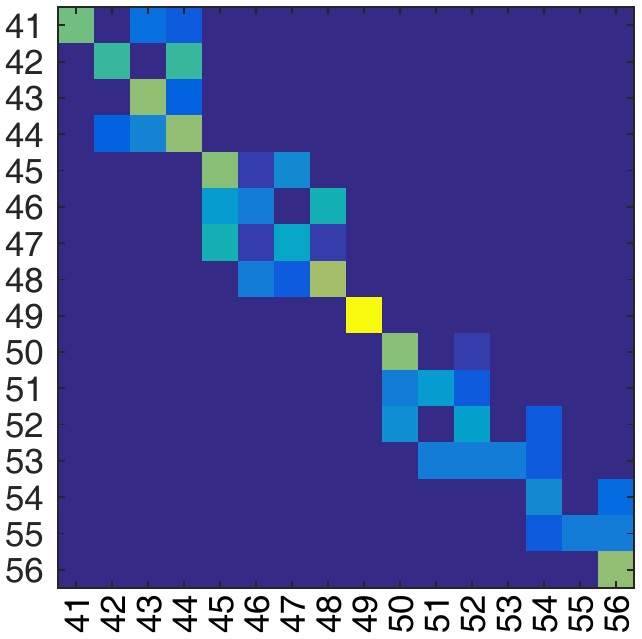}}
  \caption{Confusion matrix of our A$^2$GNN on HDM05 dataset according to the testing protocol in~\cite{huang2017riemannian}. (b) and (c) are the close up of Part A and Part B in (a). X-axis and Y-axis are associated with the indices of action classes.\protect\\
  \textit{**~Indices of part classes: 5-depositFloorR; 6-depositHighR; 7-depositLowR; 8-depositMiddleR; 13-grabFloorR; 14-grabHighR; 15-grabLowR; 16-grabMiddleR; 41-kickLFront1Reps; 42-kickLFront2Reps; 43-kickLSide1Reps; 44-kickLSide2Reps; 45-kickRFront1Reps; 46-kickRFront2Reps; 47-kickRSide1Reps; 48-kickRSide2Reps; 50-punchLFront1Reps; 51-punchLFront2Reps; 52-punchLSide1Reps; 53-punchLSide2Reps; 54-punchRFront1Reps; 55-punchRFront2Reps; 56-punchRSide1Reps.}}
  \label{fig:HDM05}
\end{figure}

\begin{figure}[t]
  \centering
  \subfigure[Cross View]{
    \label{fig:NTU:CV}
    \includegraphics[width=1.5in]{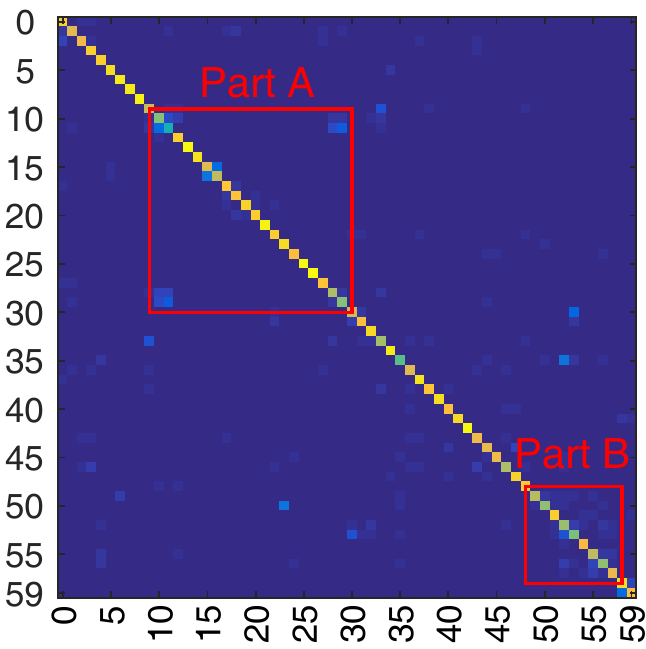}}
  \hspace{-0.1in}
  \subfigure[Cross Subject]{
    \label{fig:NTU:CS}
    \includegraphics[width=1.8in]{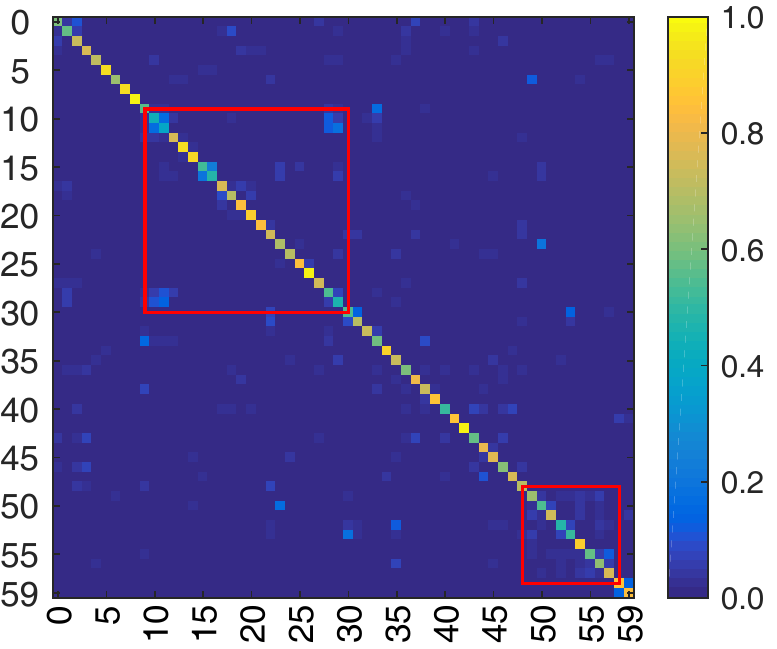}}
  \hspace{-0.1in}
  \subfigure[]{
    \label{fig:NTU:1}
    \includegraphics[width=1.4in]{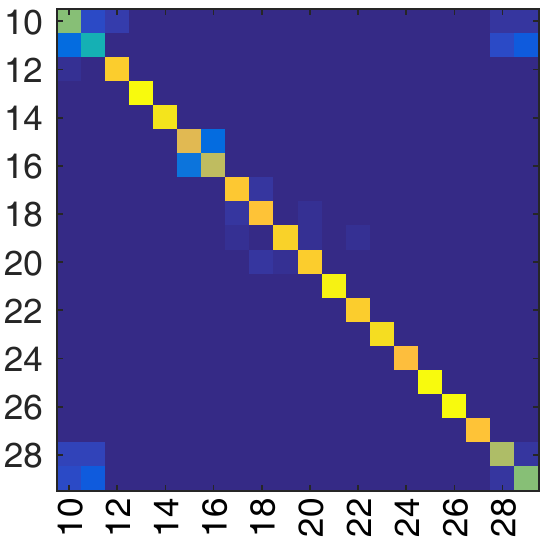}}
  \hspace{-0.1in}
  \subfigure[]{
    \label{fig:NTU:2}
    \includegraphics[width=1.4in]{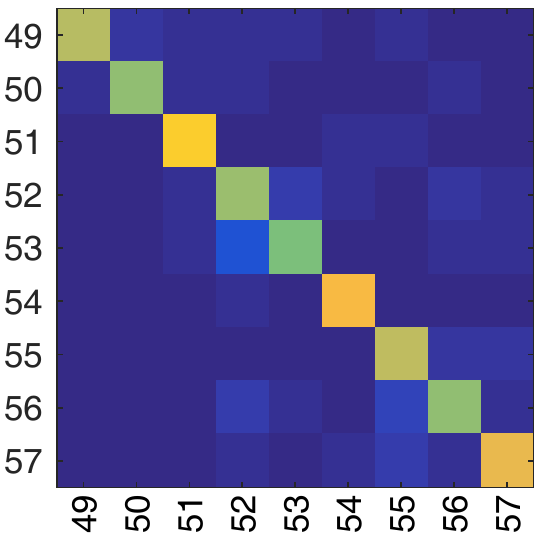}}
  \caption{Confusion matrix of our A$^2$GNN on NTU RGB+D dataset. (c) and (d) are the close up of Part A and Part B in (a). X-axis and Y-axis are associated with the indices of action classes.\protect\\
  \textit{**~Indices of part classes: 10-reading; 11-writing; 15-wear a shoe; 16-take off a shoe; 17-wear on glasses; 18-take off glasses; 19-put on a hat/cap; 20-take off a hat/cap; 28-playing with phone/tablet; 29-typing on a keyboardp; 49-punching/slapping other person; 50-kicking other person; 51-pushing other person; 52-pat on back of other person; 53-point finger at the other person; 54-hugging other person; 55-giving something to other person; 56-touch other person's pocket; 57-handshaking.}}
  \label{fig:NTU}
\end{figure}

\begin{figure}[t]
  \centering
  \includegraphics[width=3.3in]{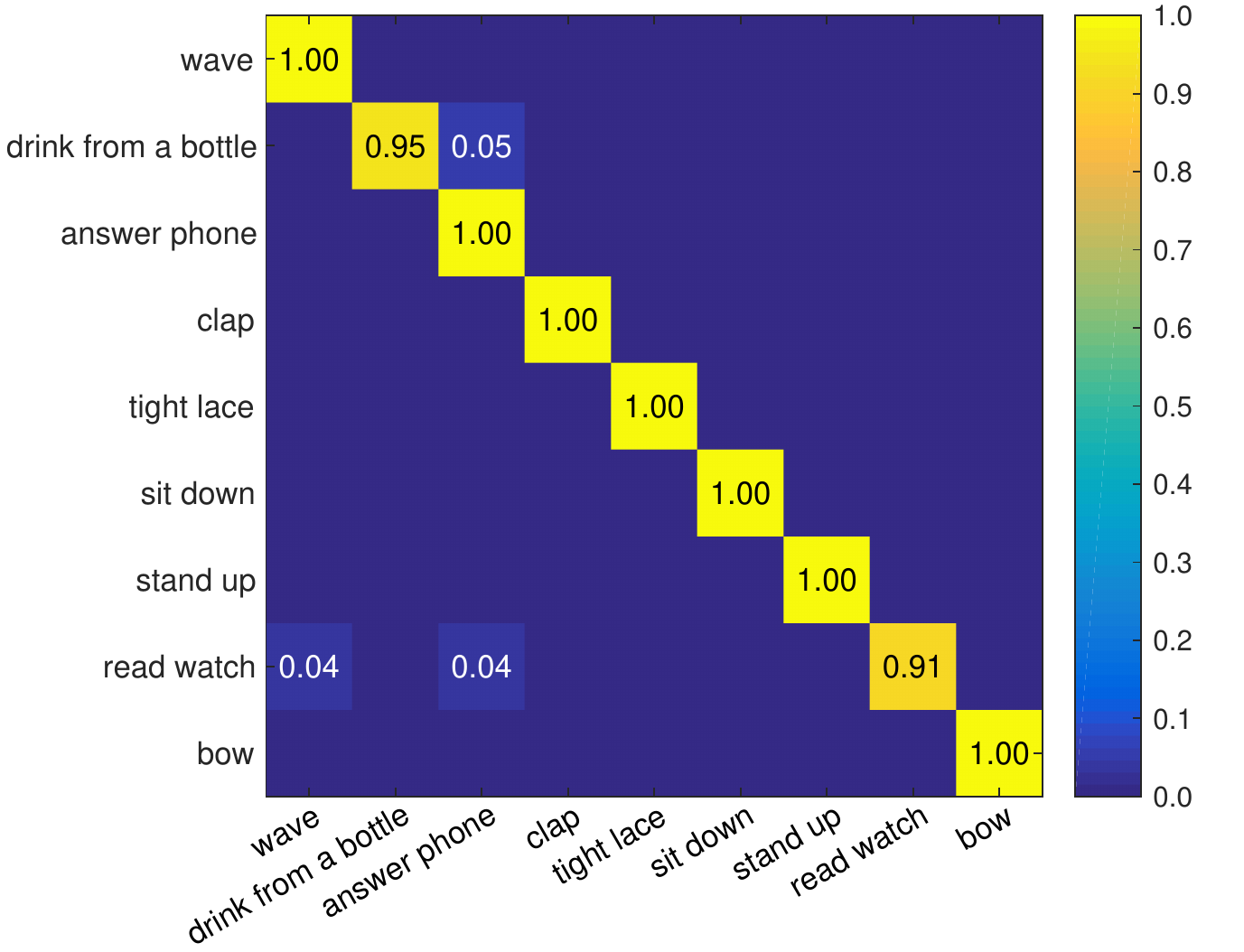}
  \caption{Confusion matrix of our A$^2$GNN on Florence 3D Actions dataset. }
  \label{fig:Florence}
\end{figure}

\subsubsection{The Results}~

\paragraph{HDM05} To compare with those previous literatures, we conduct two types of experiments by following two widely-used protocols. First, we follow the protocol used in~\cite{wang2015beyond} to perform action recognition on all of the 130 classes. Actions of two subjects named ``bd" and ``mm" are used to train the model, and the remaining three ones for testing. The comparison results are shown in the left column of Table~\ref{tab:HDM05}. Second, to fairly compare the current deep learning methods, we follow the settings of the literature~\cite{huang2017riemannian} to conduct 10 random evaluations, each of which randomly selects half of the sequences for training and the rest for testing. The results are reported in the right column of Table~\ref{tab:HDM05}.

\paragraph{Florence 3D} We follow the experimental settings of the literature~\cite{wang2016graph} to perform leave-one-subject-out cross-validation. In each round, skeletal data of 9 subjects is taken for training and the remain one for testing. The experimental results are reported in Table~\ref{tab:Florence}.

\paragraph{LSC} We follow the protocol designed in the recent work~\cite{zhang2016} to conduct two types of experiments, {random cross subject} and {random cross sample}. The experimental results are reported in Table~\ref{tab:Combined}.

\paragraph{NTU RGB+D} Different from the above three datasets, we preprocess the joint coordinates in a way similar to~\cite{shahroudy2016ntu}. Concretely, after translating the original coordinates of body joints as mentioned above, we rotate the $x$ axis parallel to the 3D vector from ``right shoulder" to ``left shoulder" and $y$ axis towards the 3D vector from ``spine base" to ``spine". The $z$ axis is fixed as the new $x\times y$. This dataset has two types of standard evaluation protocols~\cite{shahroudy2016ntu}. One is cross-subject evaluation, for which half of the subjects are used for training and the remaining ones for testing. The second is cross-view evaluation, for which two viewpoints are used for training and one is left out for testing. The experimental results are shown in Table~\ref{tab:NTU}.

\subsubsection{The Analysis}~

As shown in Table~\ref{tab:HDM05}$\sim$\ref{tab:NTU}, we compare the current state-of-the-art methods on different datasets, including the shallow learning methods and the deep learning methods. From these results, we have the following observations:
\begin{itemize}
  \item \textit{Matrix-based descriptors (\eg, covariance or its variants) are conventionally used to model spatial or temporal relationships}. Moreover, these descriptors are often regarded to be embedded on specific geometric manifolds~\cite{vemulapalli2014human,huang2016deep,huang2017riemannian}. The advanced variant~\cite{wang2015beyond} improved the performance by constructing some robust SPD matrices. More recently, SPDNet~\cite{huang2016deep} and LieNet~\cite{huang2017riemannian} attempted to learn deep features from those raw matrix descriptors under the assumption of manifold. Although the deep manifold learning strategy on matrix descriptors raises a promising direction to some extent, the matrix-based representations principally limit their capability of modeling dynamic variations, because the only second-order statistic relationship of skeletal joints is preserved in the descriptors, whereas first-order statistics is also informative~\cite{ranzato2010modeling}.
  \item \textit{Deep features are more effective than those shallow features for skeleton-based action representation}. The advanced nonlinear dynamic networks, specifically Deep-RNN, Deep-LSTM, P-LSTM~\cite{shahroudy2016ntu} and ST-LSTM~\cite{liu2016spatio}, largely improve the action recognition performance, due to the good encoding capability of gated network units. Most of them use recurrent networks to model temporal dynamics. Besides, ST-LSTM also attempted to model spatial skeletal joints by taking a tree-structure traversal way on spatial joints. Similar to them, we also use recurrent neural network to model temporal dynamics. But different from them, we directly extract high-level semantic features from spatial skeletal graphs like the standard convolutional neural network.
  \item \textit{The proposed deep graph method is superior to the recent graph-based method~\cite{wang2016graph}}. As shown in Table~\ref{tab:Florence}, our A$^2$GNN has a large improvement (about 7\%) in contrast to the work~\cite{wang2016graph}. In principle, our A$^2$GNN is very different from this work~\cite{wang2016graph}, although graph is used for both. First, the graph is used to describe skeleton at each temporal slice for our method, not those segmented motion parts (called motionlets). Second, the purpose of the use of graph is to extract skeletal features for ours, not model the relationship of motion parts. Third, our method is a fully end-to-end deep learning architecture, not abide by the conventional two-step way: i) construct graphs of motion parts, and ii) compute the similarity between graphs via subgraph-patten graph kernel. Besides, the action-attending mechanism is introduced into our network architecture.
  \item \textit{Our proposed A$^2$GNN greatly improves the current state-of-the-art on most datasets}. On the Florence dataset, our method achieves a nearly perfect performance 98.60\%. On the current largest dataset NTU RGB+D dataset, the state-of-the-art performance is pushed to the higher 72.74\% and 82.80\%, from 69.2\% and 77.7\% for ST-LSTM. In summary, we can benefit from the deep graph network architecture. The reason can be two folds: i) the deep skeletal graph features, rather than simple spatial or temporal features learnt by LSTM; and ii) the preservation of more original feature information (as signals of each node), rather than only the second-order statistics of skeletal joints.
  \item \textit{Different performances occur on different datasets}. Among the four datasets, Florence 3D is the simplest one with 215 sequences and 9 action classes, so most methods can obtain a good accuracy. The most difficult dataset should be the largest dataset NTU RGB+D dataset, which consists of 56880 sequences and covers various of daily actions and pair actions. The cross subject accuracy on it just surpasses 70\% due to various entangled actions as analyzed in Section~\ref{sec:confusion}. Specifically, the phenomenon of entangled actions deteriorates in HDM05, which contains many confusable classes, such as walk step number, walk start with left or right, etc.
  \item \textit{Cross subject is more difficult than cross view or cross sample}. The phenomenon is observed from Table~\ref{tab:Combined}, Table~\ref{tab:NTU}, and Table~\ref{tab:HDM05} (the left/right column w.r.t cross subject/cross sample). It is easy to understand, each subject has itself action characteristics. In the cross subject task, more unforeseeable information exists the testing set, compared to the other tasks.
\end{itemize}

\begin{figure*}[!t]
 \centering
  \subfigure[Drink from a bottle]{
    \label{fig:F_1}
    \includegraphics[width=3in]{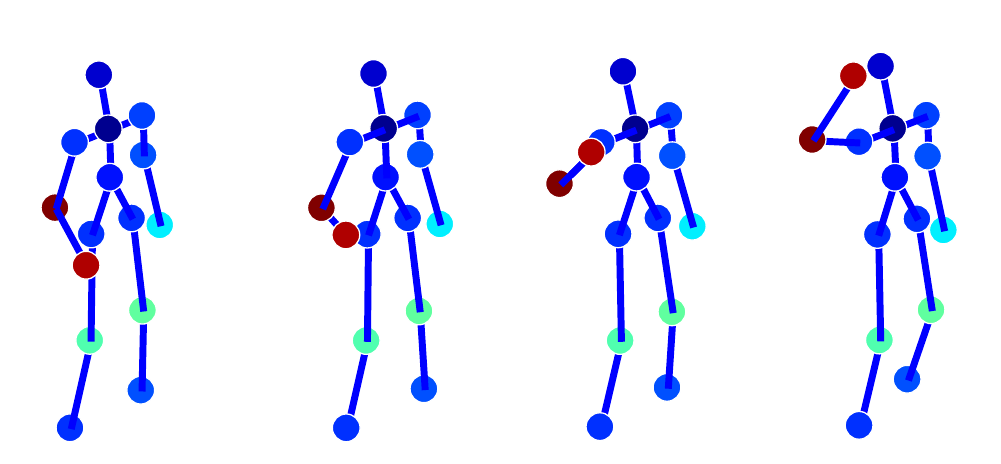}}
  \hspace{0.2in}
  \subfigure[Answer phone]{
    \label{fig:F_2}
    \includegraphics[width=3in]{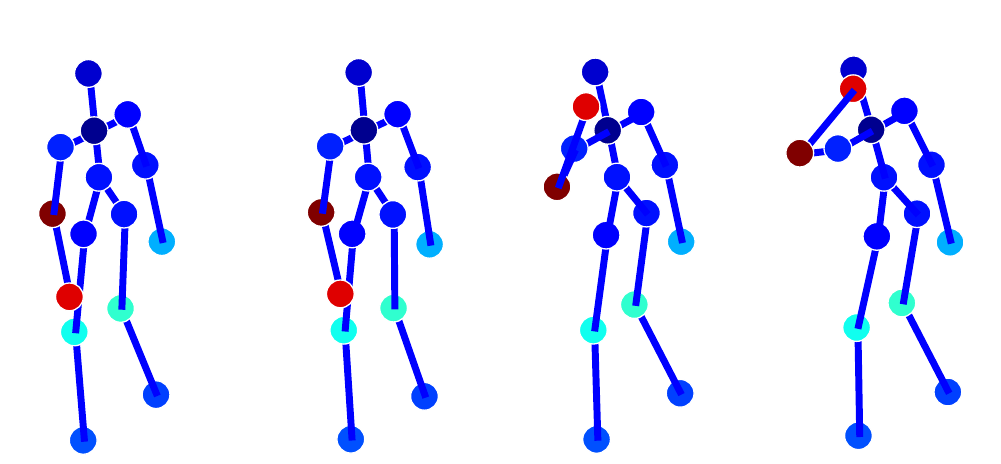}}
  \subfigure[Clap]{
    \label{fig:F_3}
    \includegraphics[width=3in]{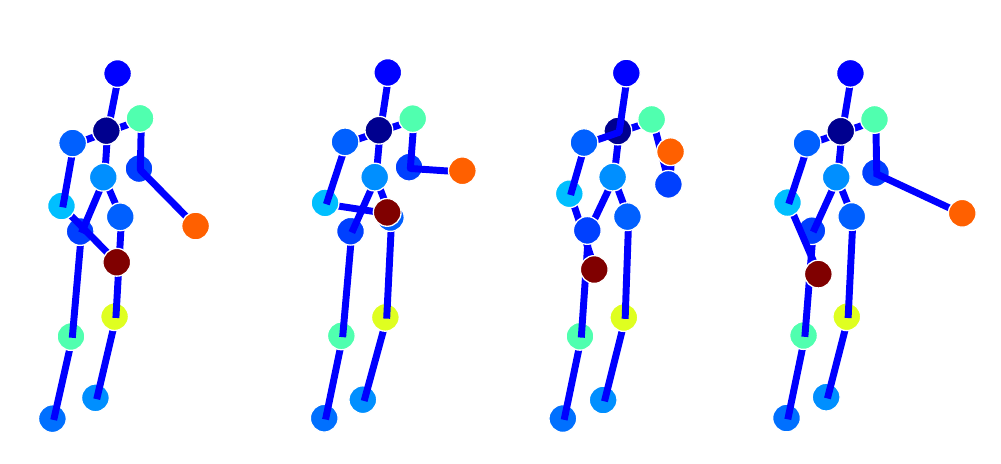}}
  \hspace{0.2in}
  \subfigure[Tight lace]{
    \label{fig:F_4}
    \includegraphics[width=3in]{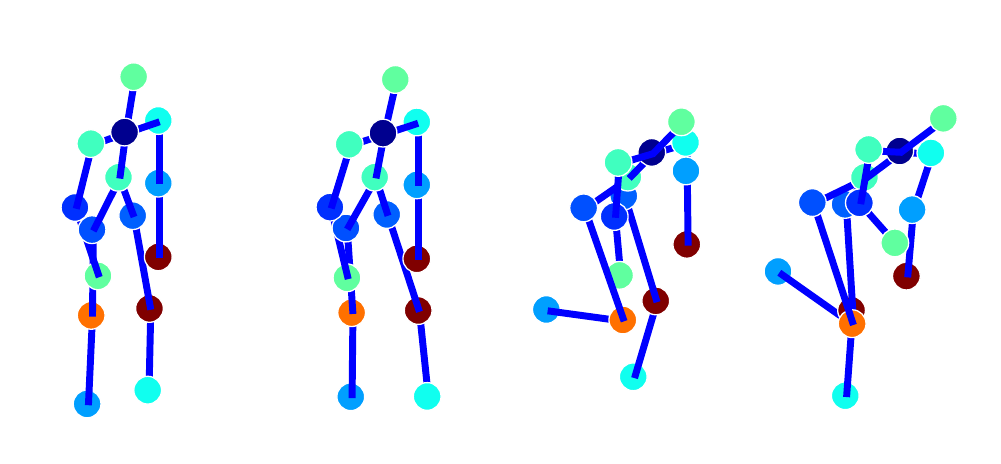}}
  \subfigure[Sit down]{
    \label{fig:F_5}
    \includegraphics[width=3in]{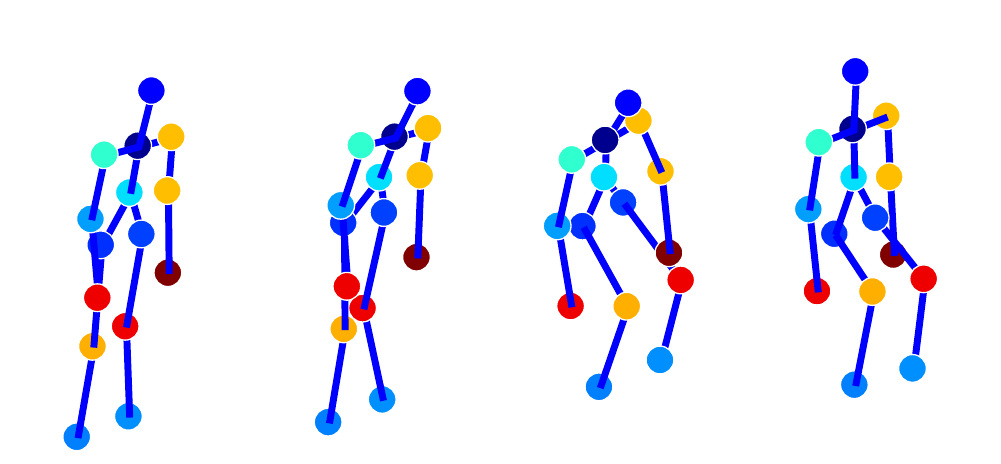}}
  \hspace{0.2in}
  \subfigure[Stand up]{
    \label{fig:F_7}
    \includegraphics[width=3in]{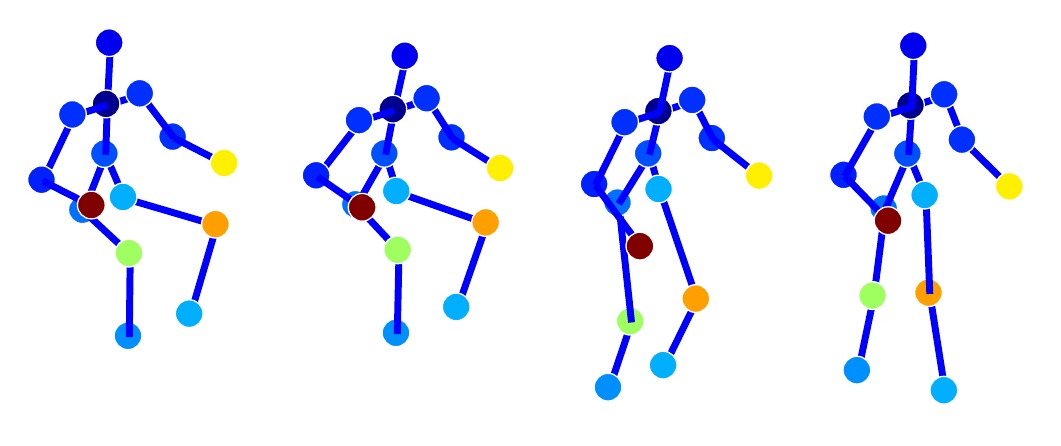}}
  \subfigure[Read watch]{
    \label{fig:F_6}
    \includegraphics[width=3in]{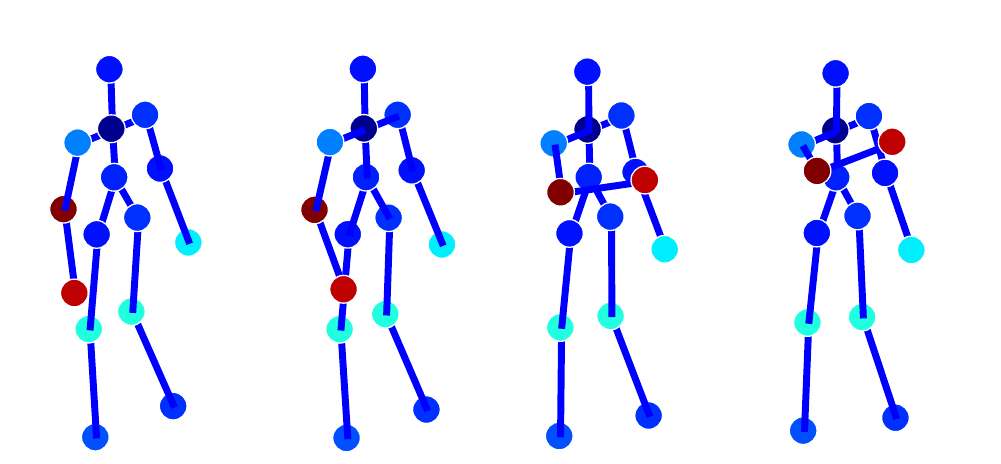}}
  \hspace{0.2in}
  \subfigure[Bow]{
    \label{fig:F_8}
    \includegraphics[width=3in]{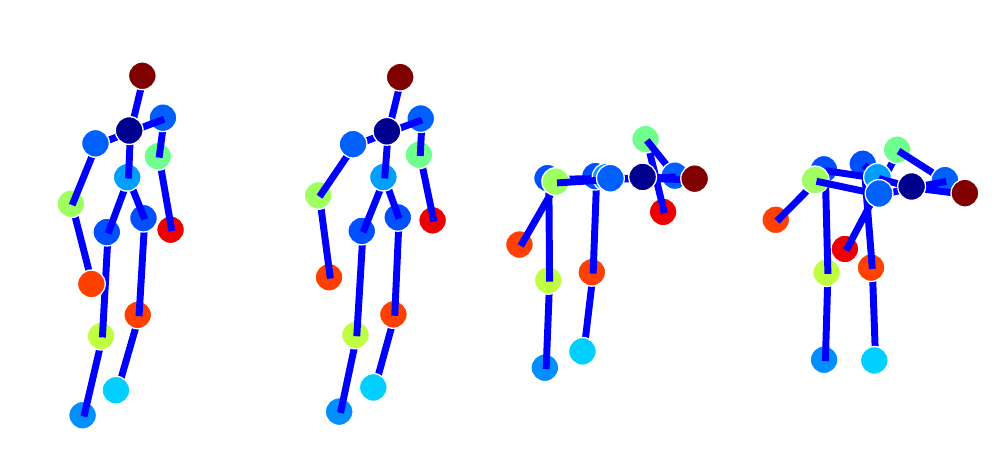}}
  \subfigure[Wave]{
    \label{fig:F_9}
    \includegraphics[width=3in]{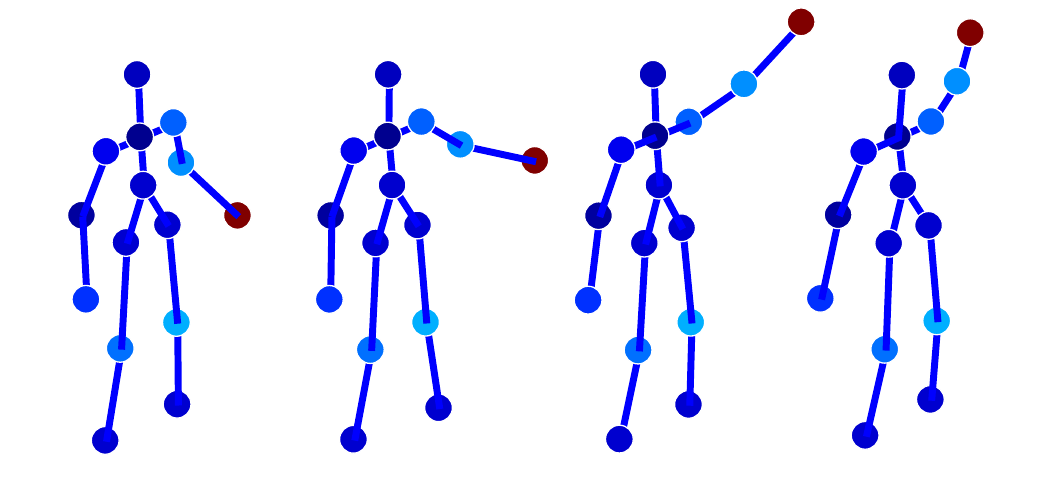}}
  \hspace{0.2in}
  \subfigure[Wave]{
    \label{fig:F_10}
    \includegraphics[width=3in]{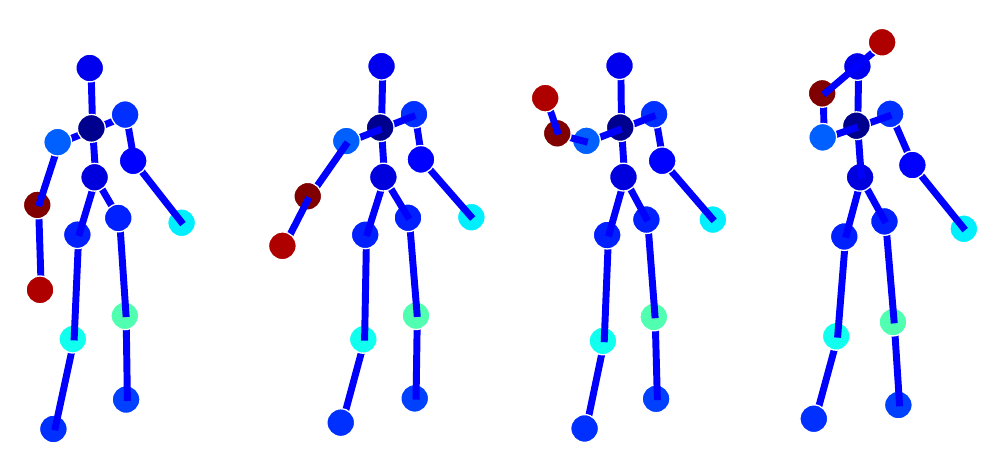}}
  \subfigure[Colorbar]{
    \includegraphics[width=2.5in]{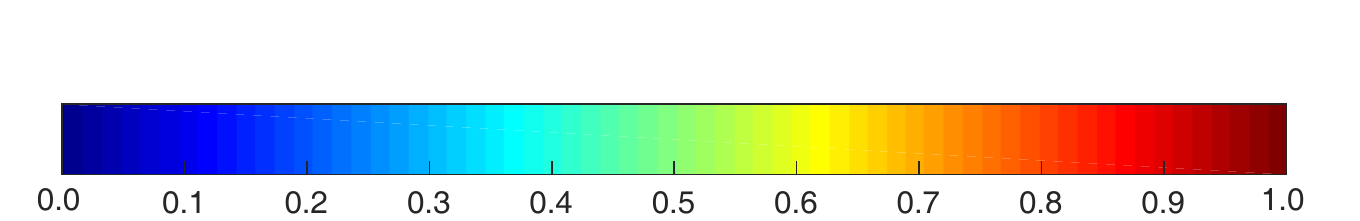}}
  \caption{Visual examples of detected salient action units of all 9 different action classes on Florence 3D. Higher weights in the colorbar means more important to characterize the action. Note that the motion sequences in (i) and (j) are annotated as one class (``wave") in this dataset, although one is left arm while the other is right arm.}
  \label{fig:visual:Florence}
\end{figure*}

\begin{figure*}[!t]
  \centering
  \subfigure[Drink water]{
    \label{fig:N_00}
    \includegraphics[width=2.8in]{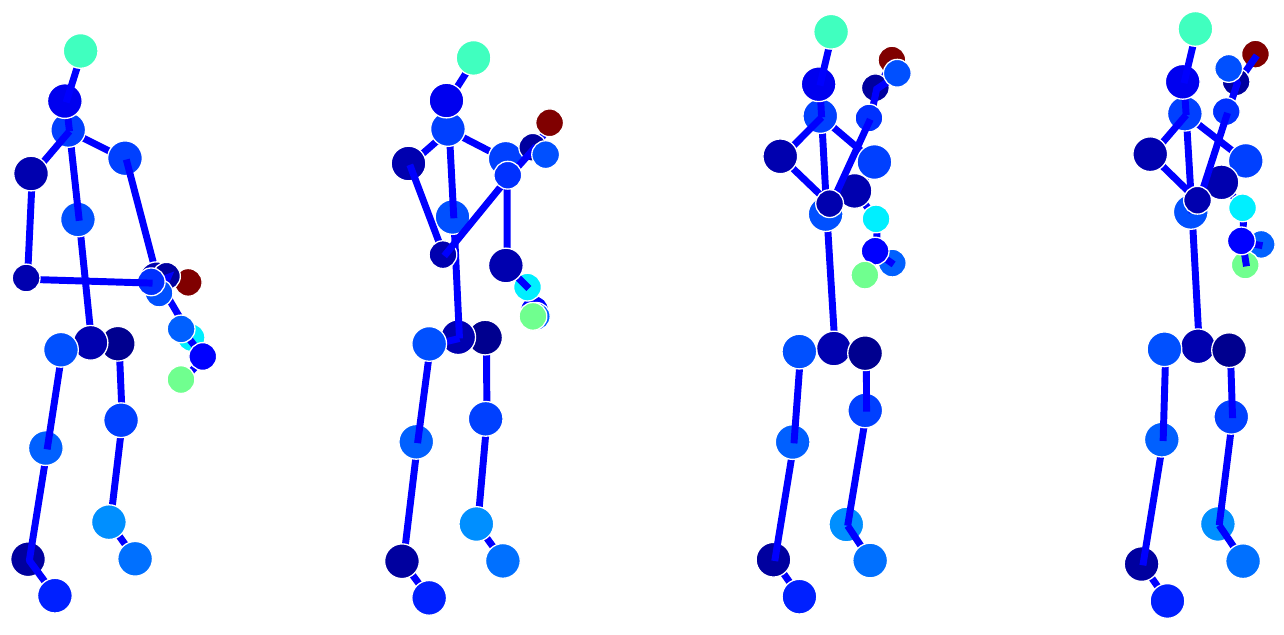}}
  \hspace{0.5in}
  \subfigure[Eat meal/snack]{
    \label{fig:N_01}
    \includegraphics[width=2.8in]{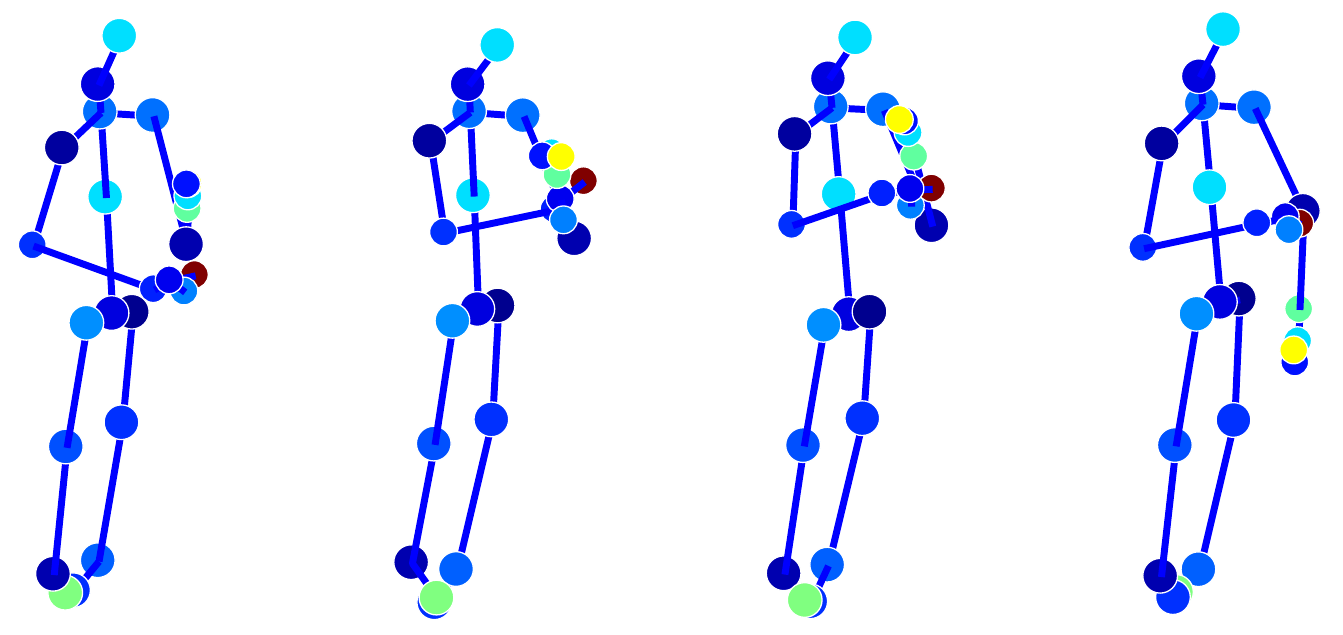}}
  \subfigure[Brushing teeth]{
    \label{fig:N_02}
    \includegraphics[width=2.8in]{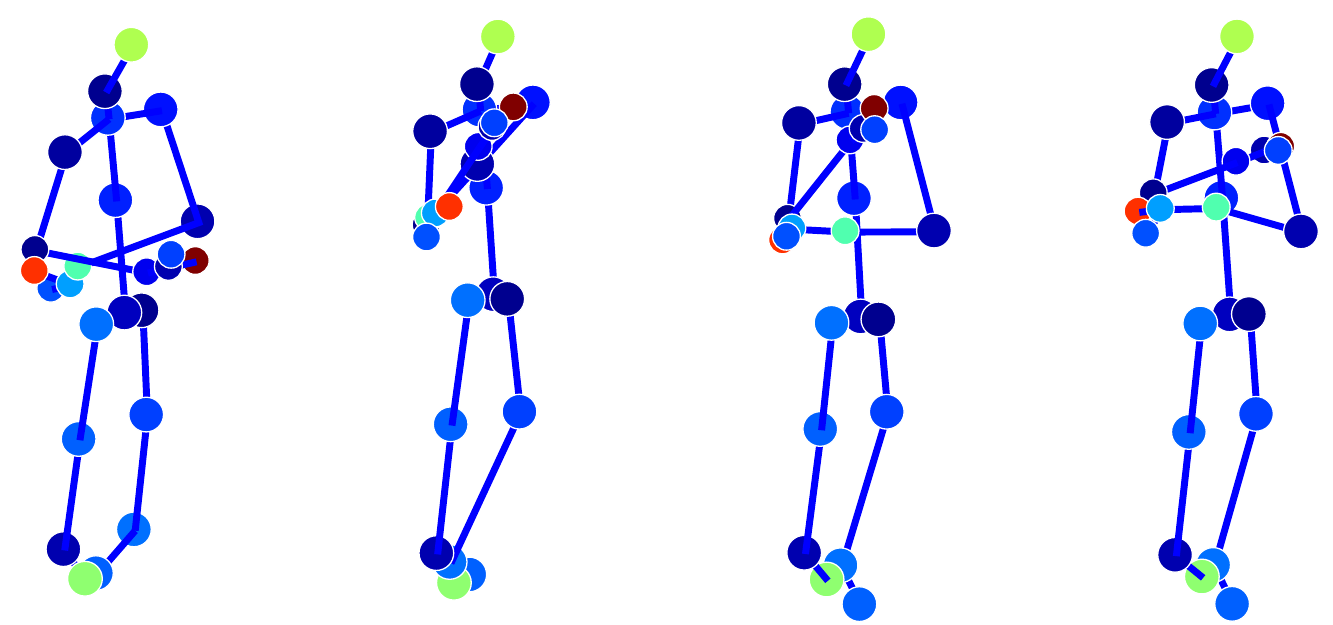}}
  \hspace{0.5in}
  \subfigure[Use a fan (with hand or paper)]{
    \label{fig:N_48}
    \includegraphics[width=2.8in]{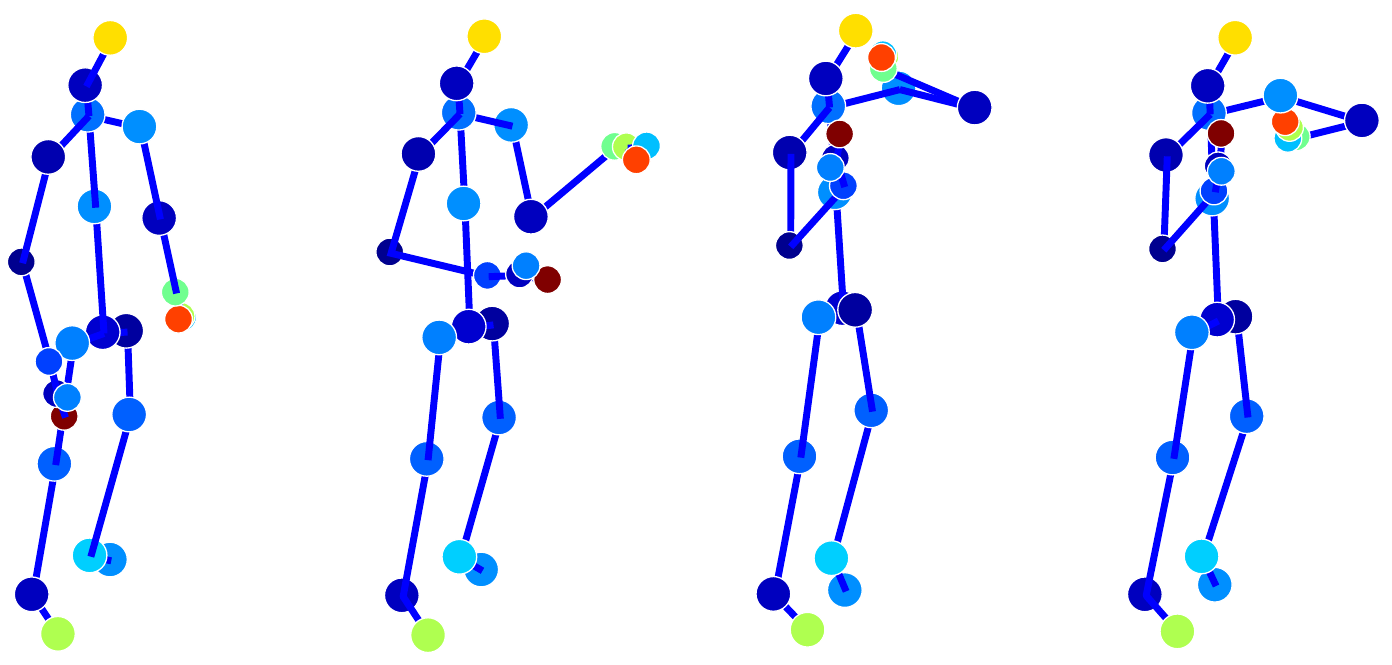}}
  \subfigure[Pickup]{
    \label{fig:N_05}
    \includegraphics[width=2.8in]{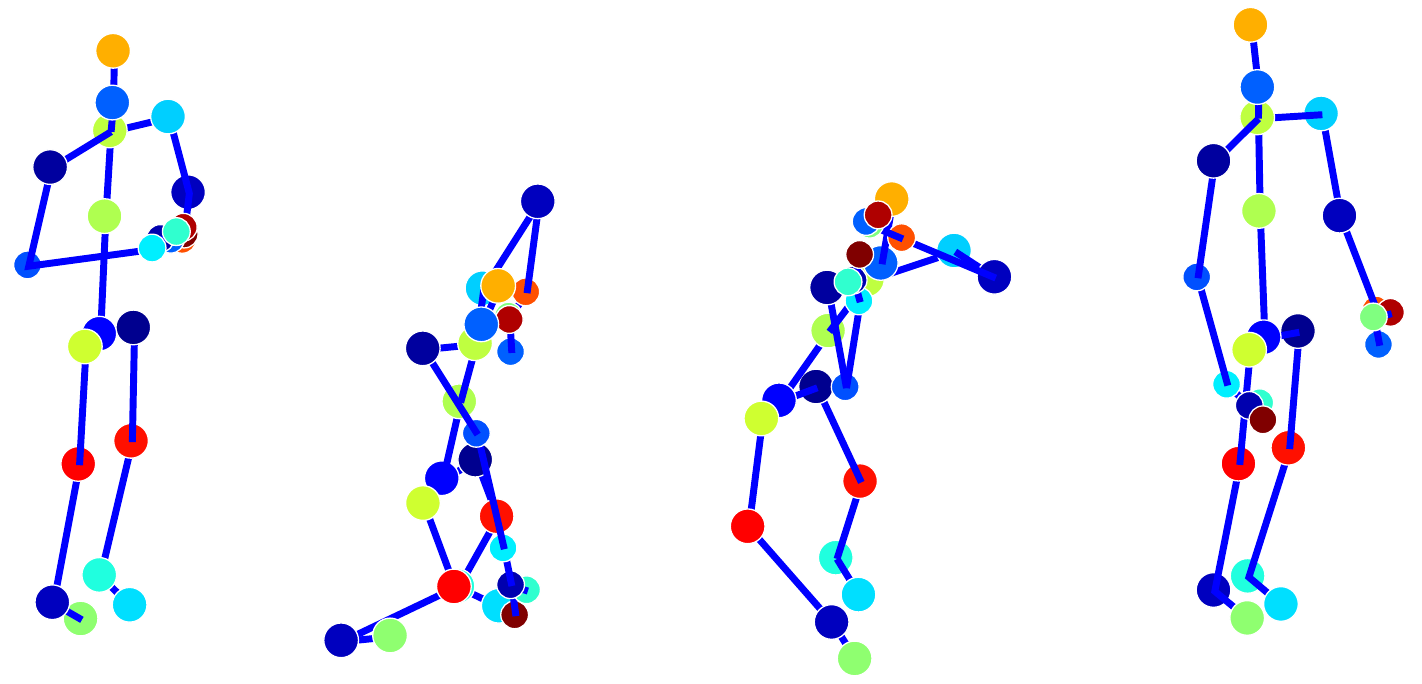}}
  \hspace{0.5in}
  \subfigure[Throw]{
    \label{fig:N_06}
    \includegraphics[width=2.8in]{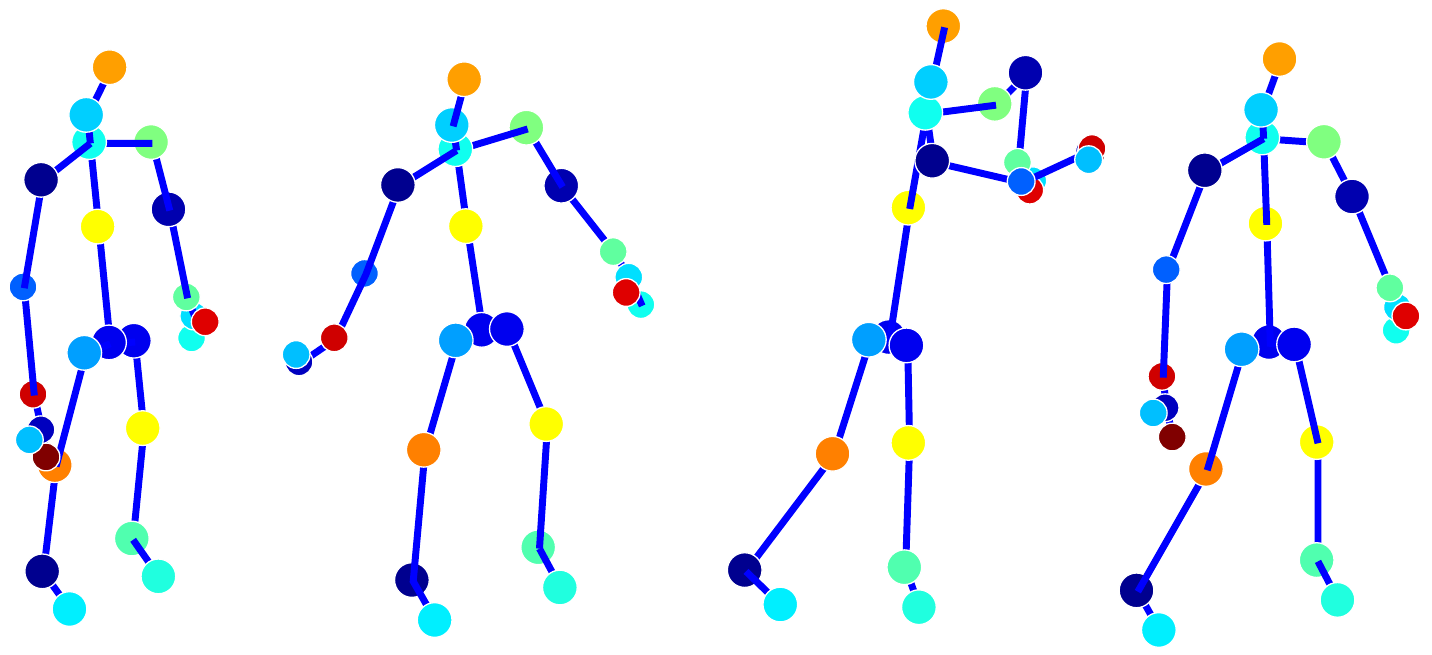}}
  \subfigure[Wear a shoe]{
    \label{fig:N_15}
    \includegraphics[width=2.8in]{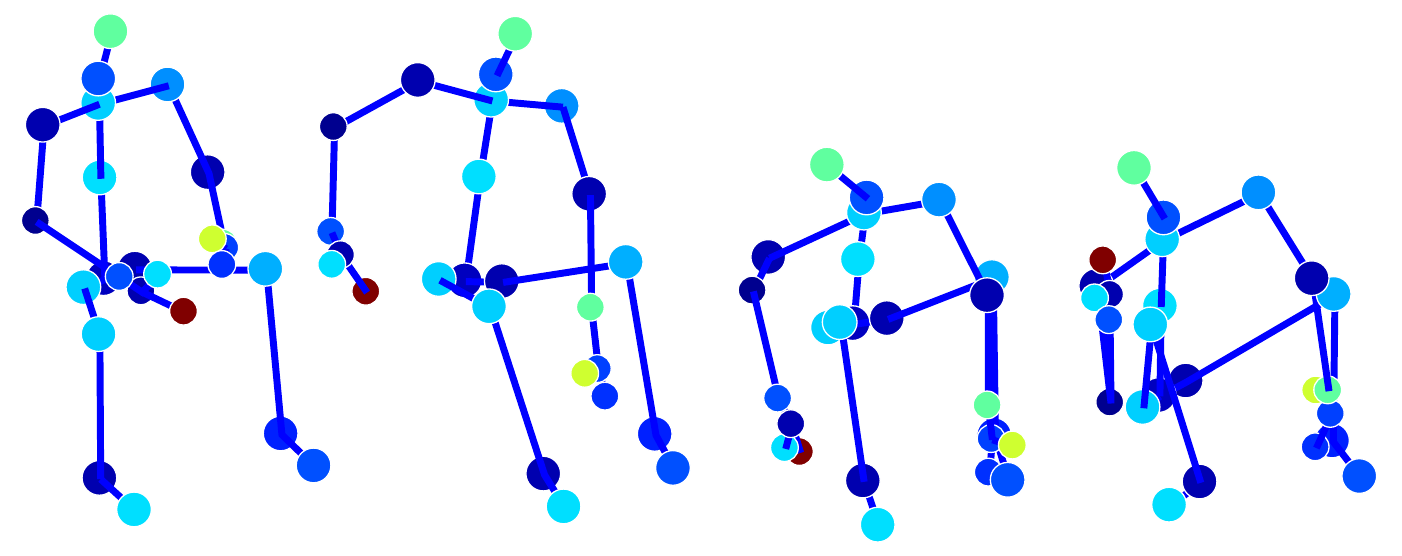}}
  \hspace{0.5in}
  \subfigure[Cheer up]{
    \label{fig:N_21}
    \includegraphics[width=2.8in]{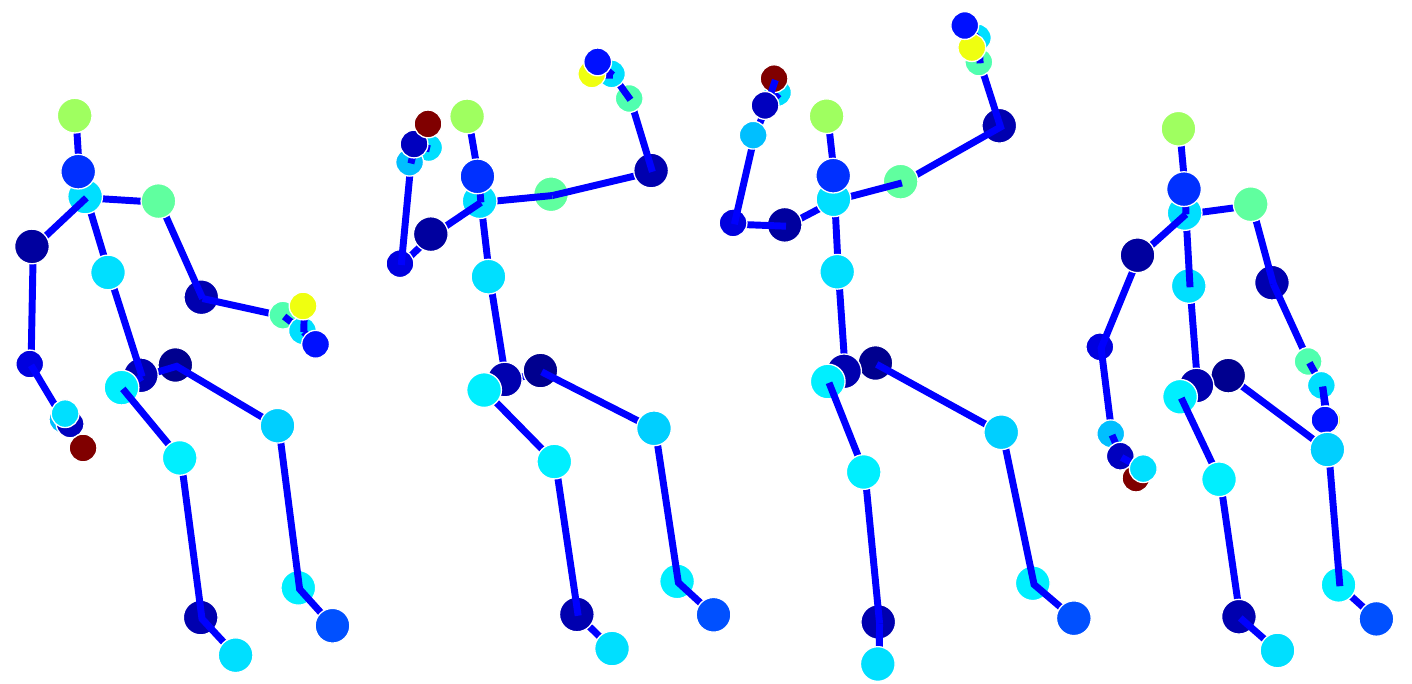}}
  \subfigure[Hopping (one foot jumping)]{
    \label{fig:N_25}
    \includegraphics[width=2.8in]{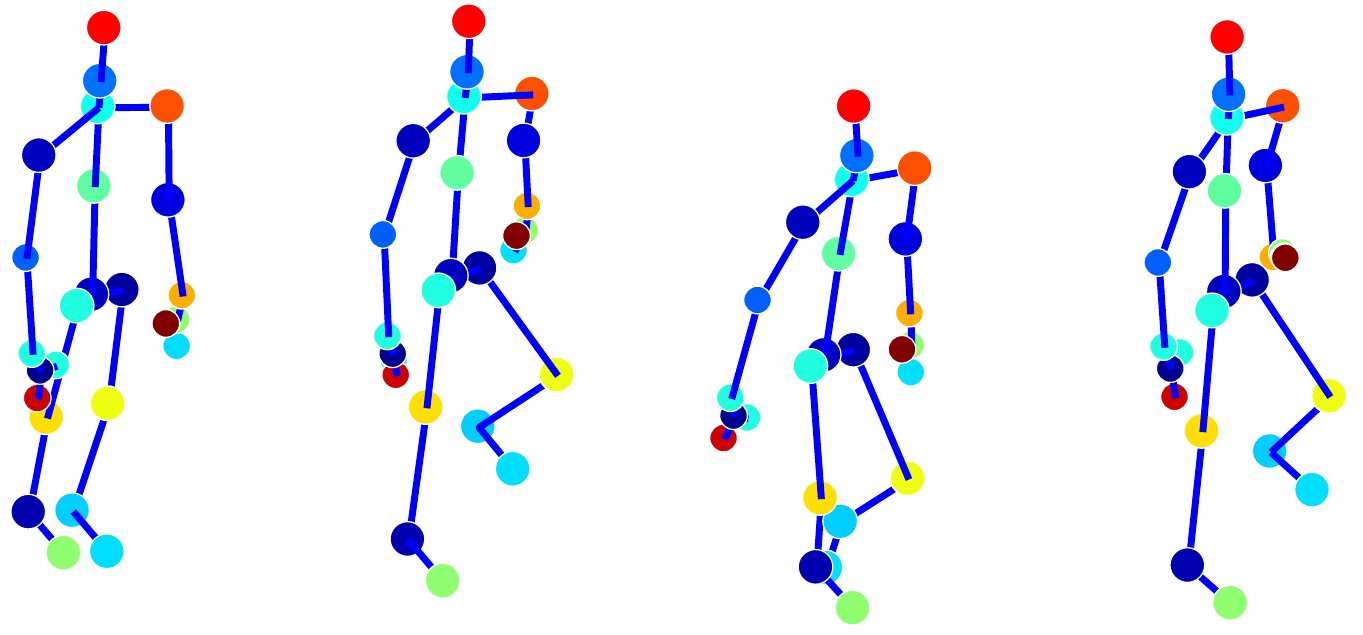}}
  \hspace{0.5in}
  \subfigure[Jump up]{
    \label{fig:N_26}
    \includegraphics[width=2.8in]{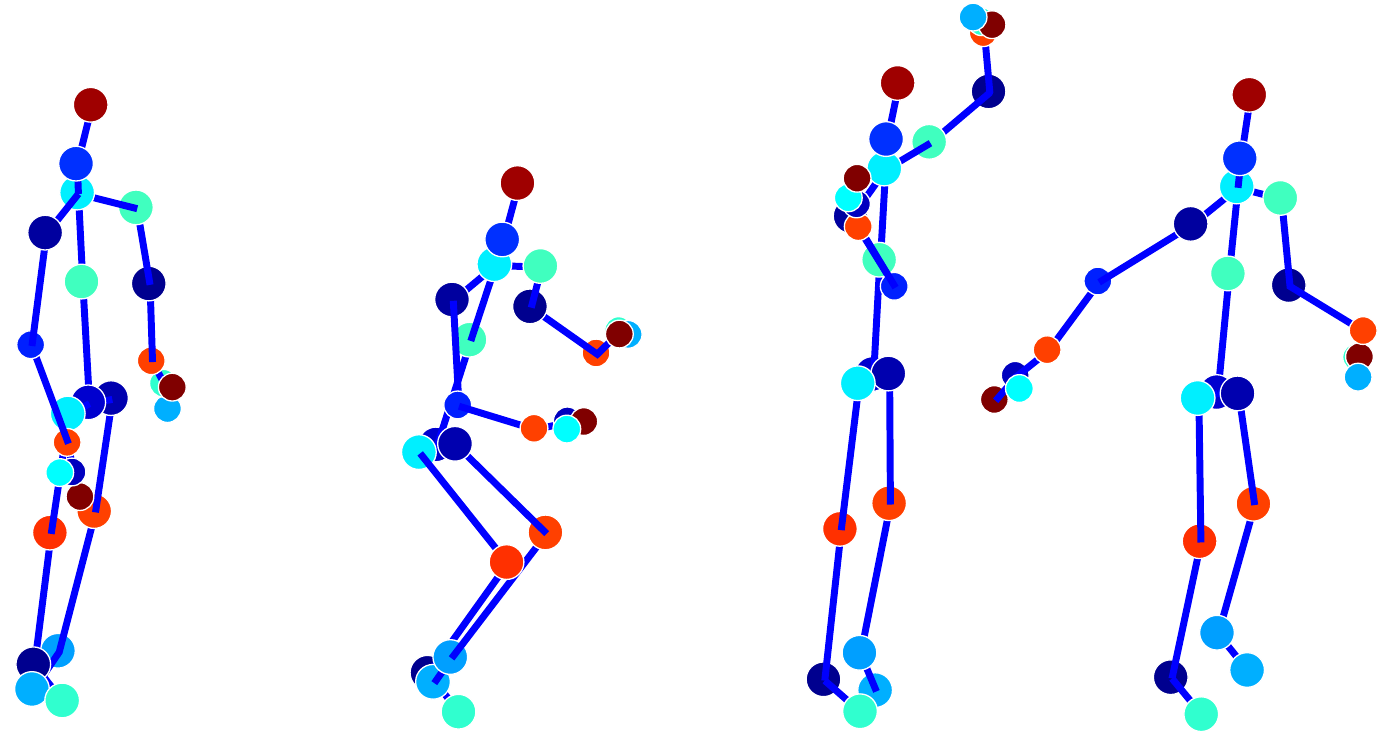}}
  \par
  \subfigure[Colorbar]{
    \includegraphics[width=2.5in]{colorbar.pdf}}
  \caption{Visual examples of detected salient action units on NTU RGB+D. Higher weights in the colorbar means more important to characterize the action.}
  \label{fig:visual:NTU}
\end{figure*}

\begin{table*}[t]
  \caption{Results of tuning network modules. }
  \label{tab:cmp-exp}
  \centering
  \renewcommand\arraystretch{1.2}
    \begin{tabular}{l|c|c|c|c|c|c|c|c|c}
    \hline\hline
    \multirow{3}{*}{Method}&\multicolumn{2}{c|}{HDM05}&\multirow{2}{*}{Florence}&\multicolumn{4}{c|}{LSC}&\multicolumn{2}{c}{NTU RGB+D}\\
    \cline{2-3}\cline{5-10}&Protocol~\cite{wang2015beyond}&Protocol~\cite{huang2017riemannian}&&\multicolumn{2}{c|}{Cross Sample} &\multicolumn{2}{c|}{Cross Subject}&Cross Subject&Cross View\\
    \cline{2-10}&Acc.&Acc.&Acc.&Prec.&Rec.&Prec.&Rec.&Acc.&Acc.\\
    \cline{1-10}
    A$^2$GNN $\circleddash$ filt. $\circleddash$ AU det.
    & 71.67\% & 77.74\%$\pm$1.61 & 93.49\% & 79.03\% & 76.96\% & 73.21\% & 70.77\% & 63.18\% & 68.82\% \\
    A$^2$GNN $\circleddash$ AU det.
    & 75.28\% & 83.01\%$\pm$1.43 & 96.74\% & 86.56\% & 85.71\% & 79.97\% & 79.92\% & 68.26\% & 80.08\% \\
    A$^2$GNN
    & 76.46\% & 84.47\%$\pm$1.52 & 98.60\% & 87.61\% & 88.10\% & 83.98\% & 82.03\% & 72.74\% & 82.80\% \\
    \hline\hline
    \end{tabular}
\end{table*}

\subsection{Analysis of Confusion Matrices}\label{sec:confusion}

To further reveal what classes are easy to confuse with others, we show confusion matrices on HDM05, NTU RGB+D and Florence datasets, repectively in Fig.~\ref{fig:HDM05}, Fig.~\ref{fig:NTU}, and Fig.~\ref{fig:Florence}. For LSC dataset, we don't depict its confusion matrix due to the different evaluation criterion (precision and recall).

For the HDM05 dataset, as shown in Fig.~\ref{fig:HDM05:All}, we give the confusion matrix of 130 classes in the case of recognition rate 84.47\% (\ie, the testing protocol follows the literature~\cite{huang2017riemannian}). The diagonal characterizes the correct classification for each action, and those non-diagonals depict the confusion results cross different classes. In order to view them more clearly, we provide two close up areas (Part A and Part B) in Fig.~\ref{fig:HDM05:A} and Fig.~\ref{fig:HDM05:B}. As observed from the two subfigures, our proposed method suffers some failures in distinguishing those very similar actions. For example, the depositing and grabbing are seriously confused, as ``depositFloorR" and ``grabFloorR" are almost visually consistent in knees bent and arms stretch. For a basic kicking behavior, the actions of left foot forwards (``kickLFront1Reps") or left foot sideways (``kickLSide1Reps") are subtle with small inter-class distance. Likewise, there are some other confusable actions, such as ``kickRFront1Reps'' vs ``kickRSide1Reps", ``punchLFront1Reps" vs ``punchLFront2Reps" and so on. Note that the postfixes ``1Reps" and ``2Reps" means the repetitive times of a action.

Different from HDM05 with finer-partitioned action classes, NTU RGB+D contains many pairs of reversed actions, such as ``wear A" vs ``take off A", ``put on A" vs ``take off A", etc. As shown in Fig.~\ref{fig:NTU:1}, these reversed pairs are easy to be confused when only considering human skeleton data. Besides, similar to the observation from HDM05, the confusion cases occur in those similar motions, such as ``pat on back of other person" vs ``point finger at the other person". Likewise, this phenomenon happens in Florence 3D, as shown in Fig.~\ref{fig:Florence}. Although the accuracies of 100\% are achieved on seven actions (\ie, wave, answer phone, clap, tight lace, sit down, stand up, bow), there are a few uncorrect classifications including ``drink from a bottle'' to ``answer phone", ``read watch" to ``wave"/``answer phone". A future possible solution to these above confusable phenomenons is that the contextual information in the RGB color space may be considered to compensate for the plain coordinate information of skeletal joints to some extent.

\subsection{Visualization of Action Units}\label{sec:visual}

To verify whether the action-attending layer detects those salient action units, we provide some visualization examples in Fig.~\ref{fig:visual:Florence} and Fig.~\ref{fig:visual:NTU}, where skeletal joints are colored according to the learnt weights. Note that here we exhibit
the first action-attending layer because the learnt weights are directly associated with skeletal joints. From the visualization of Florence 3D in Fig.~\ref{fig:visual:Florence}, we can find that our A$^2$GNN is able to learn those salient joints for all 9 actions. For examples, for the ``wave" action, those joints on the moving right arm are important; for the ``tight lace" action, the joints of hands and knees (bent) is endowed with higher weights. Specifically, for the same ``wave" action, our method can still correctly detect the moving units of the left arm or the right arm, as shown in Fig.~\ref{fig:F_9} and Fig.~\ref{fig:F_10}.

For the more complex action dataset, NTU RGB+D, we can still observe that those detected salient action units are almost matched to our intuitive understanding. For the ``pick up" action, the units of head, hands and knees are crucial to identify this action. For the ``hopping" action (right foot jumping) in Fig.~\ref{fig:N_25}, besides those salient joints on head, hands and knees, the joint on left shoulder is still more salient than that of right shoulder due to the more drastic motion variations on left shoulder compared to right shoulder. In contrast to Florence 3D, as NTU RGB+D is configured with more sensors to record human motions, thus more detailed motions can be captured, \eg, the foe joint on the ``wear a shoe" action, the ``hopping" action.

Consequently, the above observations indicate that the action-attending layer can adaptively weight skeletal joints for different actions, and the detected salient action units almost conform to our cognitive understanding on human actions. Moreover, the detection of action units can bring some gains of the accuracy as analyzed in Section~\ref{sec:params_analy}.

\subsection{Tuning of Our Network}\label{sec:params_analy}

\begin{figure}[t]
  \centering
  \includegraphics[width=3in]{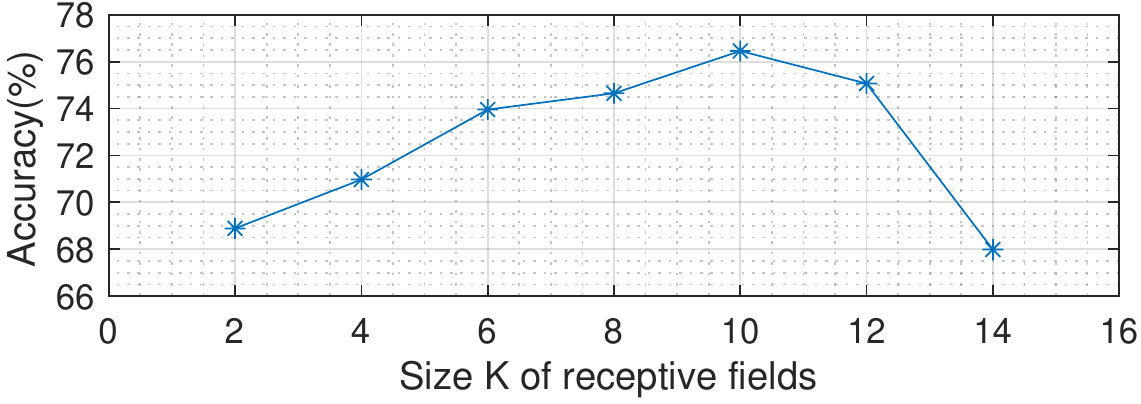}
  \caption{The performance trend of different receptive fields $K$ on HDM05.}
  \label{fig:params_K}
\end{figure}

In our proposed A$^2$GNN, there are three main modules: spectral graph filtering, action unit detection, and temporal motion modeling. The last module is an indispensable unit to our network. To dissect our network architecture, we conduct experiments on the four datasets by removing the former two modules (\ie, A$^2$GNN $\circleddash$ filt. $\circleddash$ AU det.) or only removing AU detection (\ie, A$^2$GNN $\circleddash$ AU det.).
The comparison results are reported in Table~\ref{tab:cmp-exp}. As observed from this table, the detection of action units can be helpful for action recognition more or less, as skeletal joints are successfully activated with different weights as analyzed in Section~\ref{sec:visual}. Further, the spectral graph filtering module plays a tremendous role in promoting recognition accuracy, as exhibited at the former two rows in the table. The main reason should be that high-level semantic features are extracted from graphs like the conventional convolution network on grid-shape images.

Among the parameters of our network, the most key is the size $K$ of receptive fields. With the increase of $K$, the local filtering region, \ie, covering the hopping neighbors, will become larger. To check the influence of different $K$ values, we conduct an experiment on HDM05 dataset by testing $K =2,4,6,8,10,12,14$. The results are shown in Fig.~\ref{fig:params_K}, we can find that, the performance improves with the increase of $K$ due to larger receptive fields, and then degrades after $K=10$, for which a possible reason is the locality of salient action units.

\section{Conclusion}\label{sec:conclusion}

In this paper, an end-to-end action-attending graphic neural network (A$^2$GNN) is proposed to deal with the task of skeleton-based action recognition. In order to extract deep features from body skeletons, we model human skeletons into undirected attribute graphs, and then perform spectral graph filtering on skeletal graphs like the standard CNN. To detect those salient action units crucial to identify human motions, we further design an action-attending layer to adaptively weight skeletal joints for different actions. Those extracted deep graph features at consecutive frames are finally fed into a recurrent network of LSTM. Extensive experiments and analyses have indicated that the modules of spectral graph filtering and action unit detection play an important role in the improvement on action classification. Especially, the action-attending layer also produces some interesting salient action units, which may be understandable from the view of our cognition. Further, our proposed A$^2$GNN has achieved the state-of-the-art results on the four public skeleton-based action datasets, including the current largest and most challenging NTU RGB+D dataset. In the future, we will explore the fusion RGB color information into our network.


%

%



\ifCLASSOPTIONcaptionsoff
  \newpage
\fi



%

\bibliographystyle{IEEETran}
\bibliography{egbib}

\begin{thebibliography}{10}
\providecommand{\url}[1]{#1}
\csname url@samestyle\endcsname
\providecommand{\newblock}{\relax}
\providecommand{\bibinfo}[2]{#2}
\providecommand{\BIBentrySTDinterwordspacing}{\spaceskip=0pt\relax}
\providecommand{\BIBentryALTinterwordstretchfactor}{4}
\providecommand{\BIBentryALTinterwordspacing}{\spaceskip=\fontdimen2\font plus
\BIBentryALTinterwordstretchfactor\fontdimen3\font minus
  \fontdimen4\font\relax}
\providecommand{\BIBforeignlanguage}[2]{{%
\expandafter\ifx\csname l@#1\endcsname\relax
\typeout{** WARNING: IEEEtran.bst: No hyphenation pattern has been}%
\typeout{** loaded for the language `#1'. Using the pattern for}%
\typeout{** the default language instead.}%
\else
\language=\csname l@#1\endcsname
\fi
#2}}
\providecommand{\BIBdecl}{\relax}
\BIBdecl

\bibitem{aggarwal2011human}
J.~K. Aggarwal and M.~S. Ryoo, ``Human activity analysis: A review,'' \emph{ACM
  Comput. Survey}, vol.~43, no.~3, p.~16, 2011.

\bibitem{ye2013survey}
M.~Ye, Q.~Zhang, L.~Wang, J.~Zhu, R.~Yang, and J.~Gall, ``A survey on human
  motion analysis from depth data,'' in \emph{Time-of-Flight and Depth Imaging.
  Sensors, Algorithms, and Applications}, 2013, pp. 149--187.

\bibitem{johansson1975visual}
G.~Johansson, ``Visual motion perception,'' \emph{Scientific Am.}, vol. 232,
  no.~6, pp. 76--88, 1975.

\bibitem{amor2016action}
B.~B. Amor, J.~Su, and A.~Srivastava, ``Action recognition using rate-invariant
  analysis of skeletal shape trajectories,'' \emph{IEEE Trans. Pattern Anal.
  Mach. Intell.}, vol.~38, no.~1, pp. 1--13, 2016.

\bibitem{cai2016effective}
X.~Cai, W.~Zhou, L.~Wu, J.~Luo, and H.~Li, ``Effective active skeleton
  representation for low latency human action recognition,'' \emph{IEEE Trans.
  Multimedia}, vol.~18, no.~2, pp. 141--154, 2016.

\bibitem{devanne20153}
M.~Devanne, H.~Wannous, S.~Berretti, P.~Pala, M.~Daoudi, and A.~Del~Bimbo,
  ``3-d human action recognition by shape analysis of motion trajectories on
  riemannian manifold,'' \emph{IEEE Trans. Cybern.}, vol.~45, no.~7, pp.
  1340--1352, 2015.

\bibitem{ellis2013exploring}
C.~Ellis, S.~Z. Masood, M.~F. Tappen, J.~J. LaViola, and R.~Sukthankar,
  ``Exploring the trade-off between accuracy and observational latency in
  action recognition,'' \emph{Int. J. Comput. Vis.}, vol. 101, no.~3, pp.
  420--436, 2013.

\bibitem{slama2015accurate}
R.~Slama, H.~Wannous, M.~Daoudi, and A.~Srivastava, ``Accurate 3d action
  recognition using learning on the grassmann manifold,'' \emph{Pattern
  Recognit.}, vol.~48, no.~2, pp. 556--567, 2015.

\bibitem{tao2015moving}
L.~Tao and R.~Vidal, ``Moving poselets: A discriminative and interpretable
  skeletal motion representation for action recognition,'' in \emph{Proc.
  ICCVW}, 2015, pp. 61--69.

\bibitem{vemulapalli2014human}
R.~Vemulapalli, F.~Arrate, and R.~Chellappa, ``Human action recognition by
  representing 3d skeletons as points in a lie group,'' in \emph{Proc. CVPR},
  2014, pp. 588--595.

\bibitem{wang2014learning}
J.~Wang, Z.~Liu, Y.~Wu, and J.~Yuan, ``Learning actionlet ensemble for 3d human
  action recognition,'' \emph{IEEE Trans. Pattern Anal. Mach. Intell.}, vol.~5,
  no.~36, pp. 914--927, 2014.

\bibitem{xia2012view}
L.~Xia, C.-C. Chen, and J.~Aggarwal, ``View invariant human action recognition
  using histograms of 3d joints,'' in \emph{Proc. CVPRW}, 2012, pp. 20--27.

\bibitem{wang2016graph}
P.~Wang, C.~Yuan, W.~Hu, B.~Li, and Y.~Zhang, ``Graph based skeleton motion
  representation and similarity measurement for action recognition,'' in
  \emph{Proc. ECCV}, 2016, pp. 370--385.

\bibitem{shotton2011real}
J.~Shotton, A.~Fitzgibbon, M.~Cook, T.~Sharp, M.~Finocchio, R.~Moore,
  A.~Kipman, and A.~Blake, ``Real-time human pose recognition in parts from
  single depth images,'' in \emph{Proc. CVPR}, 2011, pp. 1297--1304.

\bibitem{du2016representation}
Y.~Du, Y.~Fu, and L.~Wang, ``Representation learning of temporal dynamics for
  skeleton-based action recognition,'' \emph{IEEE Trans. Image Process.},
  vol.~25, no.~7, pp. 3010--3022, 2016.

\bibitem{krizhevsky2012imagenet}
A.~Krizhevsky, I.~Sutskever, and G.~E. Hinton, ``Imagenet classification with
  deep convolutional neural networks,'' in \emph{Proc. NIPS}, 2012, pp.
  1097--1105.

\bibitem{muller2007documentation}
M.~M{\"u}ller, T.~R{\"o}der, M.~Clausen, B.~Eberhardt, B.~Kr{\"u}ger, and
  A.~Weber, ``Documentation mocap database hdm05,'' 2007.

\bibitem{seidenari2013recognizing}
L.~Seidenari, V.~Varano, S.~Berretti, A.~Bimbo, and P.~Pala, ``Recognizing
  actions from depth cameras as weakly aligned multi-part bag-of-poses,'' in
  \emph{Proc. CVPRW}, 2013, pp. 479--485.

\bibitem{zhang2016}
J.~Zhang, W.~Li, P.~Wang, P.~Ogunbona, S.~Liu, and C.~Tang, ``A large scale
  rgb-d dataset for action recognition,'' in \emph{Proc. ICPR}, 2016.

\bibitem{shahroudy2016ntu}
A.~Shahroudy, J.~Liu, T.-T. Ng, and G.~Wang, ``Ntu rgb+ d: A large scale
  dataset for 3d human activity analysis,'' in \emph{Proc. CVPR}, 2016, pp.
  1010--1019.

\bibitem{hussein2013human}
M.~E. Hussein, M.~Torki, M.~A. Gowayyed, and M.~El-Saban, ``Human action
  recognition using a temporal hierarchy of covariance descriptors on 3d joint
  locations,'' in \emph{Proc. IJCAI}, 2013, pp. 2466--2472.

\bibitem{gowayyed2013histogram}
M.~A. Gowayyed, M.~Torki, M.~E. Hussein, and M.~El-Saban, ``Histogram of
  oriented displacements (hod): describing trajectories of human joints for
  action recognition,'' in \emph{Proc. IJCAI}, 2013, pp. 1351--1357.

\bibitem{ofli2014sequence}
F.~Ofli, R.~Chaudhry, G.~Kurillo, R.~Vidal, and R.~Bajcsy, ``Sequence of the
  most informative joints (smij): A new representation for human skeletal
  action recognition,'' \emph{J. Vis. Commun. Image Representation}, vol.~25,
  no.~1, pp. 24--38, 2014.

\bibitem{chaudhry2013bio}
R.~Chaudhry, F.~Ofli, G.~Kurillo, R.~Bajcsy, and R.~Vidal, ``Bio-inspired
  dynamic 3d discriminative skeletal features for human action recognition,''
  in \emph{Proc. CVPRW}, 2013, pp. 471--478.

\bibitem{zanfir2013moving}
M.~Zanfir, M.~Leordeanu, and C.~Sminchisescu, ``The moving pose: An efficient
  3d kinematics descriptor for low-latency action recognition and detection,''
  in \emph{Proc. ICCV}, 2013, pp. 2752--2759.

\bibitem{Luo2013Group}
J.~Luo, W.~Wang, and H.~Qi, ``Group sparsity and geometry constrained
  dictionary learning for action recognition from depth maps,'' in \emph{Proc.
  ICCV}, 2013, pp. 1809--1816.

\bibitem{wang2012mining}
J.~Wang, Z.~Liu, Y.~Wu, and J.~Yuan, ``Mining actionlet ensemble for action
  recognition with depth cameras,'' in \emph{Proc. CVPR}, 2012, pp. 1290--1297.

\bibitem{baccouche2011sequential}
M.~Baccouche, F.~Mamalet, C.~Wolf, C.~Garcia, and A.~Baskurt, ``Sequential deep
  learning for human action recognition,'' in \emph{Human Behavior
  Understanding}, 2011, pp. 29--39.

\bibitem{gaur2011string}
U.~Gaur, Y.~Zhu, B.~Song, and A.~Roy-Chowdhury, ``A ``string of feature graphs"
  model for recognition of complex activities in natural videos,'' in
  \emph{Proc. ICCV}, 2011, pp. 2595--2602.

\bibitem{wang2013directed}
L.~Wang and H.~Sahbi, ``Directed acyclic graph kernels for action
  recognition,'' in \emph{Proc. ICCV}, 2013, pp. 3168--3175.

\bibitem{shuman2013emerging}
D.~I. Shuman, S.~K. Narang, P.~Frossard, A.~Ortega, and P.~Vandergheynst, ``The
  emerging field of signal processing on graphs: Extending high-dimensional
  data analysis to networks and other irregular domains,'' \emph{IEEE Signal
  Process. Mag.}, vol.~30, no.~3, pp. 83--98, 2013.

\bibitem{defferrard2016convolutional}
M.~Defferrard, X.~Bresson, and P.~Vandergheynst, ``Convolutional neural
  networks on graphs with fast localized spectral filtering,'' in \emph{Proc.
  NIPS}, 2016, pp. 3837--3845.

\bibitem{bruna2013spectral}
J.~Bruna, W.~Zaremba, A.~Szlam, and Y.~LeCun, ``Spectral networks and locally
  connected networks on graphs,'' \emph{arXiv preprint arXiv:1312.6203}, 2013.

\bibitem{niepert2016learning}
M.~Niepert, M.~Ahmed, and K.~Kutzkov, ``Learning convolutional neural networks
  for graphs,'' in \emph{Proc. ICML}, 2016.

\bibitem{hammond2011wavelets}
D.~K. Hammond, P.~Vandergheynst, and R.~Gribonval, ``Wavelets on graphs via
  spectral graph theory,'' \emph{Appl. Comput. Harmon. Anal.}, vol.~30, no.~2,
  pp. 129--150, 2011.

\bibitem{bahdanau2014neural}
D.~Bahdanau, K.~Cho, and Y.~Bengio, ``Neural machine translation by jointly
  learning to align and translate,'' \emph{arXiv preprint arXiv:1409.0473},
  2014.

\bibitem{chorowski2015attention}
J.~K. Chorowski, D.~Bahdanau, D.~Serdyuk, K.~Cho, and Y.~Bengio,
  ``Attention-based models for speech recognition,'' in \emph{Proc. NIPS},
  2015, pp. 577--585.

\bibitem{xu2015show}
K.~Xu, J.~Ba, R.~Kiros, K.~Cho, A.~Courville, R.~Salakhudinov, R.~Zemel, and
  Y.~Bengio, ``Show, attend and tell: Neural image caption generation with
  visual attention,'' in \emph{Proc. ICML}, 2015, pp. 2048--2057.

\bibitem{hochreiter1997long}
S.~Hochreiter and J.~Schmidhuber, ``Long short-term memory,'' \emph{Neural
  comput.}, vol.~9, no.~8, pp. 1735--1780, 1997.

\bibitem{graves2013generating}
A.~Graves, ``Generating sequences with recurrent neural networks,'' \emph{arXiv
  preprint arXiv:1308.0850}, 2013.

\bibitem{srivastava2015unsupervised}
N.~Srivastava, E.~Mansimov, and R.~Salakhudinov, ``Unsupervised learning of
  video representations using lstms,'' in \emph{Proc. ICML}, 2015, pp.
  843--852.

\bibitem{sutskever2014sequence}
I.~Sutskever, O.~Vinyals, and Q.~V. Le, ``Sequence to sequence learning with
  neural networks,'' in \emph{Proc. NIPS}, 2014, pp. 3104--3112.

\bibitem{liu2016spatio}
J.~Liu, A.~Shahroudy, D.~Xu, and G.~Wang, ``Spatio-temporal lstm with trust
  gates for 3d human action recognition,'' in \emph{Proc. ECCV}, 2016, pp.
  816--833.

\bibitem{li2010action}
W.~Li, Z.~Zhang, and Z.~Liu, ``Action recognition based on a bag of 3d
  points,'' in \emph{Proc. CVPRW}, 2010, pp. 9--14.

\bibitem{wang2015convnets}
P.~Wang, W.~Li, Z.~Gao, C.~Tang, J.~Zhang, and P.~Ogunbona, ``Convnets-based
  action recognition from depth maps through virtual cameras and
  pseudocoloring,'' in \emph{Proc. ACMMM}, 2015, pp. 1119--1122.

\bibitem{wang2016action}
P.~Wang, W.~Li, Z.~Gao, J.~Zhang, C.~Tang, and P.~O. Ogunbona, ``Action
  recognition from depth maps using deep convolutional neural networks,''
  \emph{IEEE Trans. Human Mach. Syst.}, vol.~46, no.~4, pp. 498--509, 2016.

\bibitem{oreifej2013hon4d}
O.~Oreifej and Z.~Liu, ``Hon4d: Histogram of oriented 4d normals for activity
  recognition from depth sequences,'' in \emph{Proc. CVPR}, 2013, pp. 716--723.

\bibitem{koppula2013learning}
H.~S. Koppula, R.~Gupta, and A.~Saxena, ``Learning human activities and object
  affordances from rgb-d videos,'' \emph{Int. J. Robot. Res.}, vol.~32, no.~8,
  pp. 951--970, 2013.

\bibitem{sung2012unstructured}
J.~Sung, C.~Ponce, B.~Selman, and A.~Saxena, ``Unstructured human activity
  detection from rgbd images,'' in \emph{Proc. ICRA}, 2012, pp. 842--849.

\bibitem{bloom2013dynamic}
V.~Bloom, V.~Argyriou, and D.~Makris, ``Dynamic feature selection for online
  action recognition,'' in \emph{Proc. HBU}, 2013, pp. 64--76.

\bibitem{bloom2012g3d}
V.~Bloom, D.~Makris, and V.~Argyriou, ``G3d: A gaming action dataset and real
  time action recognition evaluation framework,'' in \emph{Proc. CVPRW}, 2012,
  pp. 7--12.

\bibitem{ni2011rgbd}
B.~Ni, G.~Wang, and P.~Moulin, ``Rgbd-hudaact: A color-depth video database for
  human daily activity recognition,'' in \emph{Proc. ICCVW}, 2011, pp.
  1147--1153.

\bibitem{chen2015utd}
C.~Chen, R.~Jafari, and N.~Kehtarnavaz, ``Utd-mhad: A multimodal dataset for
  human action recognition utilizing a depth camera and a wearable inertial
  sensor,'' in \emph{Proc. ICIP}, 2015, pp. 168--172.

\bibitem{wang2015beyond}
L.~Wang, J.~Zhang, L.~Zhou, C.~Tang, and W.~Li, ``Beyond covariance: Feature
  representation with nonlinear kernel matrices,'' in \emph{Proc. ICCV}, 2015,
  pp. 4570--4578.

\bibitem{huang2017riemannian}
Z.~Huang and L.~Van~Gool, ``A riemannian network for spd matrix learning,'' in
  \emph{Proc. AAAI}, 2017.

\bibitem{harandi2014manifold}
M.~T. Harandi, M.~Salzmann, and R.~Hartley, ``From manifold to manifold:
  Geometry-aware dimensionality reduction for spd matrices,'' in \emph{Proc.
  ECCV}, 2014, pp. 17--32.

\bibitem{tuzel2006region}
O.~Tuzel, F.~Porikli, and P.~Meer, ``Region covariance: A fast descriptor for
  detection and classification,'' in \emph{Proc. ECCV}, 2006, pp. 589--600.

\bibitem{huang2016deep}
Z.~Huang, C.~Wan, T.~Probst, and L.~Van~Gool, ``Deep learning on lie groups for
  skeleton-based action recognition,'' \emph{arXiv preprint arXiv:1612.05877},
  2016.

\bibitem{yang2017discriminative}
Y.~Yang, C.~Deng, S.~Gao, W.~Liu, D.~Tao, and X.~Gao, ``Discriminative
  multi-instance multitask learning for 3d action recognition,'' \emph{IEEE
  Trans. Multimedia}, vol.~19, no.~3, pp. 519--529, 2017.

\bibitem{hu2015jointly}
J.-F. Hu, W.-S. Zheng, J.~Lai, and J.~Zhang, ``Jointly learning heterogeneous
  features for rgb-d activity recognition,'' in \emph{Proc. CVPR}, 2015, pp.
  5344--5352.

\bibitem{evangelidis2014skeletal}
G.~Evangelidis, G.~Singh, and R.~Horaud, ``Skeletal quads: Human action
  recognition using joint quadruples,'' in \emph{Proc. ICPR}, 2014, pp.
  4513--4518.

\bibitem{batabyal2015action}
T.~Batabyal, T.~Chattopadhyay, and D.~P. Mukherjee, ``Action recognition using
  joint coordinates of 3d skeleton data,'' in \emph{Proc. ICIP}, 2015, pp.
  4107--4111.

\bibitem{ranzato2010modeling}
M.~Ranzato and G.~E. Hinton, ``Modeling pixel means and covariances using
  factorized third-order boltzmann machines,'' in \emph{Proc. CVPR}, 2010, pp.
  2551--2558.

\end{thebibliography}
\end{document}